\newcommand*{\rom}[1]{\expandafter\@slowromancap\romannumeral #1@}
\newcommand{\RNum}[1]{\uppercase\expandafter{\romannumeral #1\relax}}
\def\eqref#1{equation~\ref{#1}}
\def\1{\bm{1}}
\def\vx{{\bm{x}}}
\def\mI{{\bm{I}}}
\def\mW{{\bm{W}}}
\def\mZ{{\bm{Z}}}
\DeclareMathAlphabet{\mathsfit}{\encodingdefault}{\sfdefault}{m}{sl}
\SetMathAlphabet{\mathsfit}{bold}{\encodingdefault}{\sfdefault}{bx}{n}
\def\gL{{\mathcal{L}}}
\def\gR{{\mathcal{R}}}
\def\sP{{\mathbb{P}}}
\def\sR{{\mathbb{R}}}
\newtheorem{remark}{\textbf{Remark}}
\DeclareMathOperator*{\argmax}{arg\,max}
\DeclareMathOperator*{\argmin}{arg\,min}
\title{Regularized OFU: an Efficient UCB Estimator for Non-linear Contextual Bandit}
\author{%
  Yichi Zhou \thanks{Microsoft Research, Asia. \texttt{\{yiczho, huishuai.zhang, wche, tyliu\}@microsoft.com}} \And  Shihong Song \thanks{Tsinghua University. bellasong1996@gmail.com, dcszj@mail.tsinghua.edu.cn}
%   \thanks{Use footnote for providing further information
%     about author (webpage, alternative address)---\emph{not} for acknowledging
%     funding agencies.} \\
  % examples of more authors
   \And  Huishuai Zhang \footnotemark[1]
   \AND
   Jun Zhu \footnotemark[2]
   \And
   Wei Chen \footnotemark[1] \\
   \And
   Tie-Yan Liu \footnotemark[1] \\

  % \And
  % Coauthor \\
  % Affiliation \\
  % Address \\
  % \texttt{email} \\
  % \And
  % Coauthor \\
  % Affiliation \\
  % Address \\
  % \texttt{email} \\
}
\begin{document}

\newtheorem{Lem}{Lemma}
\newtheorem{Thm}{Theorem}
\newtheorem{Defn}{Definition}
\newtheorem{Fact}{Fact}
\newtheorem{Conj}{Conjecture}
\newtheorem{Cor}{Corollary}
\newtheorem{Problem}{Problem}
\newtheorem{State}{Statement}
\newtheorem{Assump}{Assumption}
\newtheorem{Prop}{Proposition}

\maketitle

\begin{abstract}
Balancing exploration and exploitation (EE)  is a fundamental problem in contextual bandit. One powerful principle for EE trade-off is \emph{Optimism in Face of Uncertainty} (OFU), in which the agent takes the action according to an upper confidence bound (UCB) of reward. OFU has achieved (near-)optimal regret bound for linear/kernel contextual bandits. However, it is in general unknown how to derive efficient and effective EE trade-off methods for non-linear  complex tasks, such as contextual bandit with deep neural network as the reward function. In this paper, we propose a novel OFU algorithm named \emph{regularized OFU} (ROFU).  In ROFU, we  measure the uncertainty of the reward by a differentiable function and compute the upper confidence bound by solving a regularized optimization problem. 
We prove that, for  multi-armed bandit, kernel contextual bandit and neural tangent kernel bandit, ROFU achieves (near-)optimal regret bounds with certain  uncertainty measure, which theoretically justifies its effectiveness on EE trade-off. Importantly, ROFU admits a very efficient implementation with gradient-based optimizer, which easily extends to general deep neural network models beyond neural tangent kernel, in sharp contrast with previous OFU methods. The empirical evaluation demonstrates that ROFU works extremely well for  contextual bandits under various settings.%\huishuai{my try to rephrase this. In this paper, we propose \emph{regularized optimism in face of uncertainty} (ROFU) that computes an uncertainty of the reward by solving a regularized optimization problem.}.
% Balancing exploration and exploitation (EE)  is a fundamental problem in contextual bandit. One powerful principle for EE trade-off is \emph{Optimism in face of uncertainty} (OFU), in which the agent takes the action according to an upper confidence bound (UCB) over value. However, for complex tasks where deep neural networks have to be involved, it is still largely unsolved to develop practical algorithm in complex tasks which is efficient in both computation and EE trade-off, as estimating UCB for DNN models remains challenging.   We propose a novel OFU algorithm named \emph{regularized optimism in face of uncertainty} (ROFU).  In ROFU, we  measure the uncertainty of the reward by a differentiable function and calculate the upper confidence bound by solving an optimization problem.  We proved that, with certain differentiable uncertainty measure functions, ROFU achieves (near-) optimal regret bounds on contextual bandits with relatively simple reward mechanisms, including multi-armed bandit, kernel contextual bandit and NTK bandit. Importantly, even for general DNN models, we can efficiently solve the optimization problem in ROFU with gradient-based optimizers. The empirical evaluations show that ROFU works well for non-linear contextual bandit.   
\end{abstract}

\section{Introduction}\label{sec:introduction}

Contextual bandit \cite{langford2007epoch,bubeck2012regret} is a basic sequential decision-making problem which is extensively studied and widely applied in machine learning. At each time step in contextual bandit, an agent is presented with a context and the agent chooses an action according to the context. After that the agent will receive a reward conditioned on the context and the selected action. The goal of the agent is to maximize its cumulative reward. One key challenge in contextual bandit is exploration and exploitation (EE) trade-off.   In order to maximize cumulative reward, the agent exploits collected data to take the action with high estimated reward while it also  explores the undiscovered areas to learn knowledge.

\emph{Optimism in Face of Uncertainty} (OFU) is a powerful principle for EE trade-off. When facing uncertainty, OFU first optimistically guesses how good each action could be and then takes the action with highest guess. In philosophy, it exploits collected knowledge by estimating the expected reward and enforces exploration of the areas with large uncertainty via optimism. 
% \emph{Optimism in Face of Uncertainty} (OFU) is a powerful principle for EE trade-off. When facing uncertainty, OFU first optimistically estimates the value for actions and then takes the optimal action accordingly. The philosophy is, even though the optimistic estimation is biased and thus the selected action may be  sub-optimal, the agent can learn more by exploring the areas with large uncertainty.  \huishuai{not clear about "even when the optimistic estimation is biased and thus the selected action is sub-optimal"}. %The optimistic estimation is usually provided by adding a confidence bonus which is determined by the uncertainty on the reward.And this estimation is also known as upper confidence bound (UCB). 
OFU algorithms can be divided into two categories,  i.e., confidence bonus  method \cite{auer2002finite} and confidence set method \cite{jaksch2010near}.  In confidence bonus method, the uncertainty of each action is measured according to its historical frequency, which is added to its mean value as the estimated reward; in confidence set method, the reward function is constrained to a subset with high confidence and the maximum value over the confidence set is taken as the estimated value.   %In tabular and linear problems, the confidence set has a closed-form solution and maximizing the value over it is equivalent to a confidence bonus.
These OFU algorithms have been proved to achieve the (near-)optimal  regret bounds on tabular and linear problems, including  multi-armed bandit (MAB) \cite{auer2002finite}, linear contextual bandit (LCB) \cite{chu2011contextual} and kernel contextual bandit (KCB) \cite{valko2013finite}, and  have also been successfully applied in practice.  % Theoretically, it has been proved that these OFU algorithms achieve the (near-)optimal  regret bounds. Thus, OFU algorithms have been successfully applied to many tabular and linear problems, such as multi-armed bandit problems \cite{auer2002finite}, linear contextual bandit \cite{chu2011contextual} and kernel contextual bandit \cite{valko2013finite}. 

However, the representation power of the tabular and linear models often becomes insufficient when facing many real-world contextual bandit problems where the scale of the underlying problem  is  huge or even infinite, and the reward mechanism is complex. Therefore, in practice, non-linear deep neural networks (DNN) are used to approximate the reward functions. While DNN models work well to fit training data, it remains challenging to quantify its uncertainty on new data. Therefore, existing OFU methods cannot be efficiently applied to non-linear contextual bandit. For confidence set method, it is difficult to compute the confidence set of the non-linear deep neural networks and hence hard to obtain the optimal action. For the confidence bound method, it is unclear how to design the frequency for non-linear reward functions. Overall, estimating uncertainty of the reward function approximated by non-linear functions, e.g., DNN, becomes a major  obstacle when applying OFU algorithms to real-world applications. 

%In many real-world contextual bandit problems, the scale of the underlying problem  is  huge or even infinite, and the reward mechanism is complex. In such cases, the tabular and linear models become lacking representation power.  Therefore, in practice, non-linear deep neural networks (DNN) are used to approximate the reward functions. While DNN models work well to fit training data, it remains challenging to quantify its uncertainty on new data.  Therefore, existing OFU methods cannot be efficiently applied to non-linear contextual bandit. For confidence set method, it is difficult to efficiently compute the confidence set of the non-linear deep neural networks and much harder to take optimal action accordingly. For confidence bound method, it is unclear how to design the frequency for non-linear reward functions. Thus, estimating uncertainty of the reward function approximated by non-linear functions (e.g., DNN) becomes a major  obstacle to apply OFU algorithms to real-world applications. 

There are several works \cite{zhou2020neural,filippi2010parametric,zahavy2019deep,gu2021batched} that attempt to generalize OFU to non-linear contextual bandit. One is NeuralUCB \cite{zhou2020neural} that takes the gradient of each observation as the random feature and constructs the upper confidence bound accordingly. It has been shown that NeuralUCB achieves sublinear regret on neural tangent kernel contextual bandit (NTKCB).  However,  at each time step, NeuralUCB requires to compute an inversion of a $p\times p$ matrix where $p$ is the number of model parameters.  As the time complexity of matrix inversion is $O(p^{2.373})$ \cite{davie2013improved}, the computational cost is prohibited for large-scale DNN models. Therefore, it is still largely under-explored to develop OFU algorithms that are able to achieve good  EE trade-off with reasonable computational cost  for non-linear contextual bandit. 

% There are several works \cite{zhou2020neural,filippi2010parametric,zahavy2019deep,gu2021batched} that attempt to generalize OFU to non-linear contextual bandit. The most related one is NeuralUCB \cite{zhou2020neural} with sublinear regret on neural tangent kernel bandit. NeuralUCB takes the gradient as a random feature and constructs the upper confidence bound as in linear bandit.  However, NeuralUCB, in each time step, has to compute an inversion of a matrix with dimension equivalent to the number of parameters in DNN.  The time complexity of  matrix inversion is $O(d^{2.373})$ \cite{davie2013improved} and the computational cost is not manageable for large-scale DNN models. Therefore, it is still largely under-explored to develop OFU algorithms on non-linear contextual bandit which is efficient in both computation and EE trade-off. 

Motivated by these challenges, in this paper, we propose a novel and general OFU method, called \emph{Regularized OFU (ROFU)}. %Similar to existing OFU algorithms, ROFU also optimistically estimates the value of actions, and then takes the action with the maximum optimistic estimation. Specifically,  
In ROFU, we measure the \emph{uncertainty} of the reward function by a differentiable regularizer, and then we calculate the optimistic estimation by maximizing the reward function with the regularizer penalizing its uncertainty. %\zyc{Answer for Wei's question "is this the uncertainty of reward function": This is an optimistic estimation over the reward which can be interpreted as *empirical mean + confidence bonus*. }
 We prove that  ROFU enjoys near-optimal regret on many contextual bandit problems with relatively simple reward mechanism, including MAB, LCB, KCB and NTKCB. 
Importantly, in contrast to existing OFU algorithms, ROFU admits an efficient implementation  with gradient-based optimizer as the optimization problem in ROFU is unconstrained and differentiable, which has a time complexity linear to $p$. Moreover, our algorithm easily extends to general deep neural network models. 

We summarize our contributions as follows:
\begin{itemize}
    
    \item	We show that ROFU achieves (near-)optimal regret on  contextual bandits with simple reward functions, including MAB, LCB, KCB and NTK bandit. %\zyc{Answer for Wei's question"is NTK bandit simple?": the reward function of NTK is more complex than linear and tabular functions. But I think it is still simple compared with general DNN.}
    These results theoretically justify the efficiency of ROFU on EE trade-off. 
    \item 	%While exactly solving the optimization problem in ROFU is still time-consuming, 
    We propose an algorithm which can efficiently approximate the upper confidence bound with standard gradient-based optimizer. 
    \item  We empirically evaluate ROFU on complex contextual bandits with reward functions beyond linear, kernel and NTK. We show that ROFU  also provides efficient UCB estimation for  popular DNN architectures including MLP, CNN   and ResNet. Moreover,  our algorithm enjoys a smaller regret than strong baselines on real-world  non-linear contextual bandit problems introduced by \cite{deep_bayesian_bandit_showdown}. 
\end{itemize}
\vspace{-.5cm}
% \textbf{Contributions:} 
% \begin{enumerate}
%     \item 	We propose a novel OFU algorithm, called ROFU which computes the optimistic estimation by solving an optimization function with a differentiable regularizer to measure uncertainty.  While exactly solving the optimization problem in ROFU is still time-consuming, one can efficiently approximate it with standard gradient-based optimizer.  %ROFU can estimate the upper confidence bound with standard gradient-based optimizers. Thus, ROFU is computationally efficient even if the DNN model is large. 
%     \item	We show that ROFU achieves (near-) optimal regret on  contextual bandits with simple reward functions. More specifically, on multi-armed bandit, linear and kernel contextual bandits, we prove that  UCB \cite{auer2002finite}, LinUCB \cite{chu2011contextual}, KernelUCB \cite{valko2013finite}, are our special cases, respectively. And on neural tangent kernel  bandits, we show ROFU achieves a regret bound with the same order as NeuralUCB \cite{zhou2020neural}. These results theoretically justify the efficiency of ROFU on EE trade-off. 
%     \item  We empirically evaluate ROFU on more complex contextual bandits with reward functions beyond linear, kernel and NTK. We show that ROFU can also provide efficient UCB estimation for many popular DNN architectures including MLP, CNN, Batch Normalization and ResNet. Moreover,  our algorithm enjoys a smaller regret than strong baselines on real world  non-linear contextual bandit problems introduced by \cite{deep_bayesian_bandit_showdown}. 
% \end{enumerate}

\section{Preliminary}\label{sec:preliminary}
In this work, we consider contextual bandit with general reward functions. 

\begin{Defn}[Contextual bandit]\label{eq:contextual_bandit}
Contextual bandit is a sequential decision-making problem where the agent has a set  of actions $A$. At each time step $t$, the agent first observes a context $x_t$, then   selects an action $a_t\in A$ based on the context. After taking the action, the agent receives a (random) reward $r_t$  with $\mathbb{E}r_t:=f_{\theta^*}(x_t,a_t)$ where $f_{\theta^*}(x_t,a_t)$ is a function with unknown groundtruth parameters $\theta^*$.  The agent aims to maximize its expected cumulative reward $\sum_{t\leq T}\mathbb{E}f_{\theta^*}(x_t,a_t)$ which is equivalent to minimizing the regret $R_{T}=\sum_{t\leq T}\max_{a}(f_{\theta^*}(x_t,a)-f_{\theta^*}(x_t,a_t))$. 
\end{Defn}

% \begin{Defn}[Contextual bandit]\label{eq:contextual_bandit}
% Contextual bandit is a sequential decision-making problem where the agent has $K$ actions. At each time step $t$, the agent first observes a context $x_t$, then   selects an action $a_t\in [K]$ based on the context. After that a (random) reward $r_t$ is generated with $\mathbb{E}r_t:=f_{\theta^*}(x_t,a_t)$ where $f_{\theta^*}(x_t,a_t)$ is a function with unknown groundtruth parameters $\theta^*$. The agent aims to maximize its expected cumulative reward $\sum_{t\leq T}\mathbb{E}f_{\theta^*}(x_t,a_t)$ which is equivalent to minimizing the regret $R_{T}=\sum_{t\leq T}\max_{a}(f(x_t,a)-\mathbb{E}f_{\theta^*}(x_t,a_t))$. 
% \end{Defn}

The most studied contextual bandit is linear contextual bandit, where the reward is generated from a linear function, i.e.,  $f_{\theta^*}(x,a):=\phi(x,a)^\top \theta^*$ where $\phi(x,a)$ is a known feature map. For more complex problems, we have to use non-linear models, such as DNN, to represent the reward function. %\ssh{What's the meaning of this sentence?}

In practice, given dataset $D:=\{(x_t,a_t,r_t)\}_{t\leq |D|}$, we can estimate $\theta^*$ by minimizing a loss function, i.e.,  $\Bar{\theta}:=\argmin_{\theta}\mathcal{L}(\theta;D)$. In this work, we focus on the mean squared loss  and its variants,
\begin{align}
    \text{MSE}(\theta;D):=\frac{1}{|D|}\sum_{t\leq |D|}(f_{\theta}(x_t,a_t)-r_t)^2.
\end{align}
 This is because MSE is a natural choice for regression tasks. Generally speaking, OFU algorithms  trade off exploration and exploitation with an optimistic estimation on the reward for each action, and then select $a_t=\argmax_{a}\text{OFU}(x_t,a)$. 
% i.e.,  $\text{OFU}(x_t,a) \geq f_{\Bar{\theta}}(x_t,a)$ for each action $a$,
In the literature, OFU algorithms can be divided into two categories. 
The first one optimistically estimates the reward by adding a confidence bonus to $f_{\bar{\theta}}$, i.e., 
 \begin{align}\label{eq:ofu_bonus}
     \text{OFU}^{B}(x,a):=f_{\bar{\theta}}(x,a) + bonus(x,a).
 \end{align}
We call this OFU method \emph{Bonus-OFU}. In simple cases, the bonus can be constructed using concentration bounds. For example, at time step $\tau$, in UCB1 for MAB  \cite{auer2002finite}, the bonus is $\sqrt{2\log \tau/n(a)}$, where $n(a)$ is the number of pulls on action $a$; in LinUCB for LCB \cite{chu2011contextual}, the bonus is $\sqrt{\phi(x,a)^\top (\sum_{t<\tau} \phi(x_t,a_t)\phi(x_t,a_t)^\top + \mI)^{-1}\phi(x,a)}$. Both UCB1 and LinUCB are proved to have (near-)optimal regret bounds.

The second kind of OFU algorithms optimistically estimates the reward by the best possible value over a confidence set of the reward functions, i.e., 
\begin{align}\label{eq:ofu_constrained}
\text{OFU}^{S}(x,a):=\max_{\theta\in \Theta_{\delta}}f_{\theta}(x,a),
\end{align}
where $\Theta_{\delta}:=\{\theta: \sP(D|\theta)>\delta\}$ and $\sP(D|\theta)$ is the likelihood of $D$ given $\theta$. We call this OFU method \emph{Set-OFU}. %For the LCB and MAB, Set-OFU is equivalent to Bonus-OFU with certain likelihood \huishuai{"with certain likelihood" is not clear.}. %For example, in MAB,  S-OFU is equivalent to UCB1 with certain $\delta$. 
 
 For convenience, we suppose $\theta$ is a $p$-dimensional vector. 
 Both Bonus-OFU and Set-OFU can effectively trade-off EE, however, they require a lot of computational resources when $f_{\theta}$ is a complex neural network: firstly, existing Bonus-OFU algorithms, such as NeuralUCB \cite{zhou2020neural}, need to calculate the inversion of a $p\times p$ matrix, which is expensive especially for large DNN models; secondly, it is also hard to efficiently compute the confidence set of DNN parameters and even harder to maximize the value over it to compute the $\text{OFU}^{S}$. The computational cost significantly limits the application of OFU methods for complex tasks. 
 
 In the next section, we will propose a new OFU method, named Regularized OFU, which achieves a near-optimal regret bound on simple tasks, and more importantly,   admits a very efficient implementation with a gradient-based optimizer. 
 \vspace{-.5cm}

\section{Method}

Different from Bonus-OFU and Set-OFU, we  measure the uncertainty of reward with a (differentiable) function, and compute the optimistic reward estimation under the regularization of the uncertainty measure function. 
 We call the reward estimation \emph{regularized OFU} (ROFU) and take action accordingly. In round $t$, given context and action pair $(x_t,a)$, ROFU is defined as follows and the algorithm  is presented in Alg. \ref{alg:generalROFU}:

\begin{flalign}
    \hat{\theta}(x_{t},a) &:=\argmax_{\theta} f_{\theta}(x_t,a) - \eta(x_t,a,D_{t-1})\mathcal{R}(\theta;D_{t-1}), \label{eq:rofu_optim}\\
    \text{OFU}^{R}(x_{t},a) &:= f_{\theta_{t-1}}(x_t, a) + g(f_{\hat{\theta}(x_t,a)}(x_t, a)- f_{\theta_{t-1}}(x_t, a)), \label{eq:rofu}
\end{flalign}

\begin{algorithm}[htb!]
\caption{Regularized Optimism in Face of Uncertainty}
\label{alg:generalROFU}  
\begin{algorithmic}[1]

\STATE \textbf{Input}: A reward function $f_{\theta^*}$ with unknown $\theta^*$, number of rounds $T$, a loss function $\mathcal{L}(\theta;D)$, a regularizer $\mathcal{R}(\theta;D)$, a monotonically increasing function $g:\mathbb{R}\rightarrow \mathbb{R}$ with $g(0)=0$ and the coefficient function $\eta(x,a;D)$.
\STATE $D_0:=\emptyset$. 
\FOR{t = 1, ..., T}
    \STATE Observe $x_{t}$. 
    \STATE $\forall a \in A$, compute $\hat{\theta}(x_t,a)$ and $\text{OFU}^R(x_t,a)$ according to Eq. (\ref{eq:rofu_optim}) and (\ref{eq:rofu}). 
    \STATE Take $a_t = \argmax_{a\in A} \text{OFU}^{R}(x_t,a)$ and receive reward $r_t$.
    \STATE Let $D_{t} := D_{t-1}\cup \{(x_t, a_t, r_t)\}$. 
    \STATE Obtain $\theta_t$ by minimizing $\mathcal{L}(\theta;D_t)$.
\ENDFOR 
\end{algorithmic}
\end{algorithm}

% \begin{algorithm}[htb!]
% \caption{Regularized optimism in face of uncertainty}
% \label{alg:generalROFU}  
% \begin{algorithmic}[1]

% \STATE \textbf{Input}: A reward function $f_{\theta^*}$ with unknown $\theta^*$, number of rounds $T$, a loss function $\mathcal{L}(D;\theta)$, a regularizer $\mathcal{R}$, a monotonically increasing function $g:\mathbb{R}\rightarrow \mathbb{R}$ with $g(0)=0$ and weight $\eta(x,a;D)$.
% \STATE $D:=\emptyset$. 
% \FOR{t = 1, ..., T}
%     \STATE Obtain $\theta_{t-1}$ by minimizing $\mathcal{L}(D;\theta)$
%     \STATE Observe ${(x_{t}, a)}_{a\in A}$. $\forall a \in A$, compute
%         \begin{flalign}
%             &\hat{\theta}(x,a):=\argmax_{\theta} f_{\theta}(x,a) - \eta(x,a,D)  \label{eq:rofu_optim}\mathcal{R}(D;\theta)\\
%             &OFU^R(x,a):=f_{\theta_{t-1}}(x,a) +  g(f_{\hat{\theta}(x,a)}(x,a)-f_{\theta_{t-1}}(x,a))\label{eq:rofu}
%         \end{flalign}
%     \STATE Take $a_t = \argmax_{a\in A} \text{OFU}^{R}(x_t,a)$ and receive reward $r_t$. Let $D := D\cup \{(x_t, a_t, r_t)\}$. 
% \ENDFOR 
% \end{algorithmic}
% \end{algorithm}

where $D_{t-1}$ is the set of collected data in first $t-1$ rounds, $\mathcal{R}$ is a regularizer, $\eta$ is the weight of the regularizer, $g$ is a monotonically increasing function with $g(0)=0$ and $\theta_{t-1}$ is trained by minimizing some objective function, i.e.,  $\theta_{t-1}:=\argmin_{\theta}\mathcal{L}(\theta;D_{t-1})$. We simply take action $a_{\text{ROFU}}:=\argmax_{a\in A}\text{OFU}^{R}(x_t,a)$ to interact with the environment.

%Comparing to Bonus-OFU in Eq. \eqref{eq:ofu_bonus}, instead of adding bonus to the empirical mean value, we regularize each value function by its uncertainty; comparing to set-OFU in Eq. \eqref{eq:ofu_constrained}, we give a soft regularizer for value functions (i.e., $\mathcal{R}(D;\theta)$ in Eq. \eqref{eq:rofu_optim}), but not a hard constraint (i.e., $Pr(D|\theta)>\delta$ in Eq. \eqref{eq:ofu_constrained}). \footnote{Please note that ROFU is not equivalent to S-OFU in general.} 
 %Moreover, when $Q_{\theta}$ is parameterized by a deep neural network, it is still possible to efficiently solve Eq. \eqref{eq:rofu_optim} with standard gradient-based optimizers. 
 
 In Eq.~(\ref{eq:rofu_optim}), we first solve an unconstrained optimization problem to calculate $\hat{\theta}(x_t,a)$. Then, in Eq.~(\ref{eq:rofu}), we add a bonus to $f_{\theta_{t-1}}(x_t,a)$ according to the difference between $f_{\hat{\theta}(x_t,a)}(x_t,a)$ and $f_{\theta_{t-1}}(x_t,a)$. 
 From Eq. (\ref{eq:rofu_optim}) and (\ref{eq:rofu}), we can see that , while ROFU still adds a bonus on $f_{\theta_{t-1}}(x,a)$ as Bonus-OFU, it does not need to explicitly construct the upper confidence bound via concentration inequalities. Instead, we construct the upper confidence bound by solving a regularized optimization problem which is more  general. In comparison to Set-OFU, our optimization problem is unconstrained, so Eq. (\ref{eq:rofu_optim}) can be efficiently computed by standard gradient-based optimizers. In Sec. \ref{sec:efficient_approximation}, we will propose an algorithm which can efficiently approximate $\text{OFU}^R$ with a few steps of gradient descent. 
 
 An important question is whether selecting action with maximum $\text{OFU}^R$ achieves good EE trade-off, or whether $\text{OFU}^R(x_t,a)$ is a reasonable upper confidence bound over $f_{\theta_{t-1}}(x_t,a)$. 
 To provide more insights, consider the case where $\mathcal{L}=\mathcal{R}$. It is easy to see that $\forall (x,a), f_{\hat{\theta}(x,a)}(x,a)\geq f_{\theta_{t-1}}(x,a)$. Thus, given $g$ is monotonically increasing and $g(0)=0$, we have $\text{OFU}^R(x,a)$ is an upper bound of $f_{\theta_{t-1}}(x,a)$.  Intuitively, if we can find a parameter $\hat{\theta}$ that increases the value of $f_{\theta}(x,a)$ without significantly increasing  $\mathcal{R}(\theta;D)$, then the uncertainty on the reward of $(x,a)$ would be large. Consequently, a large bonus would be added to such an action by Eq.~(\ref{eq:rofu}).  Moreover, $g$ and $\eta$ cooperatively control the scale of the confidence bonus:  the difference between $f_{\hat{\theta}(x,a)}(x,a)$ and $f_{\theta_{t-1}}(x,a)$ 
increases as $\eta$ decreases and $g$ controls the contribution of such difference to the final bonus.  %Thus, in order to achieve a small regret, fixing $\eta$, we should adjust $g$ accordingly. 
In Section~\ref{subsec:regret-analysis}, we show that with proper $g,\eta,\mathcal{L}$ and $\mathcal{R}$  we can achieve near-optimal regret bound on contextual bandit when the reward function is simple. 
\begin{remark}
 It is noteworthy that the choices of $\mathcal{R},\mathcal{L},\eta$ and $g$ are crucial to the performance on EE trade-off. For example, if we select $\mathcal{R}:=\mathbb{P}(D|\theta)$ which is used in Set-OFU, then $f_{\hat{\theta}(x,a)}=+\infty,\forall (x,a)$. Then our method would  uniformly sample actions. %Fortunately, with the designation in Sec. \ref{subsec:regret-analysis}, we can achieve (near-)optimal regret bounds on MAB, LCB, KCB and NTKCB. 
 
\end{remark}

 \subsection{Regret analysis} \label{subsec:regret-analysis}
 
 The regret bounds for ROFU are developed by drawing connections between ROFU with existing near-optimal algorithms.  Then the regret of ROFU in these cases is also near-optimal. 
 More specifically, we show that for MAB, LCB and KCB, the exact solution of Eq. (\ref{eq:rofu}) is equivalent to UCB1, LinUCB and KernelUCB respectively.   For  NTKCB, $\text{OFU}^R$ has similar properties to the upper confidence bound in NeuralUCB. We summarize these results in Table \ref{table:theory}. 
  
  \begin{table*}[ht]
\begin{center}
% \begin{adjustbox}{width=\textwidth} 
	\begin{tabular}{c  c  c  c }
	    \toprule
      &  MAB &  Linear/Kernel-CB & NTK CB\\
     	\midrule	
     	$\mathcal{R}$ & $|D|\text{MSE}(\theta;D)$  & $\|\theta\|^2 + |D|\text{MSE}(\theta;D)$ & See Eq. (\ref{eq:ntk-regularizer})\\
     	
     	$\mathcal{L}$& $\text{MSE}(\theta;D)$ &$\text{MSE}(\theta;D)$&See Eq. (\ref{eq:ntk-loss})\\ %$\|\theta-\theta_0\|^2 + |D|\text{MSE}(D;\theta)$\\

     	$\eta$ & $\frac{1}{16\log |D|}$ & $1/2$ & $1/(2\gamma^2)$\\
     	
     	$g(w)$& $\sqrt{w}$ & $\sqrt{w}$ & $\sqrt{w}$\\
     	
     	Equivalent/Related to & UCB1 & Lin/KernelUCB & NeuralUCB\\
     	Regret & $O(\log T)$ & $\tilde{O}(\sqrt{T})$ & $\tilde{O}(\sqrt{T})$\\
     	\bottomrule
	\end{tabular}
% 	\end{adjustbox}
\end{center}
\caption{Theoretical results for ROFU on contextual bandit tasks with simple reward functions. %We can see that when $g(w)=\sqrt{w}$, $\eta$ depends only on the size of the dataset on MAB and Kernel contextual bandit and the regret is also small on NTK contextual bandit.
}\label{table:theory}
\end{table*}
  
\textbf{Equivalence to UCB1, LinUCB and KernelUCB}
  
  Specifically, we  use MSE-variants for both $\mathcal{L}$ and $\mathcal{R}$ and $g(w)=\sqrt{w}$, henceforth. After fixing  $\mathcal{L}$, $\mathcal{R}$ and $g$, we can tune $\eta$ to make the solution of Eq. (\ref{eq:rofu}) equal to the upper confidence bound in UCB1, LinUCB or KernelUCB.  Recall $\text{MSE}(\theta;D):=\frac{1}{|D|}\sum_{t\leq |D|}(f_{\theta}(x_t,a_t)-r_t)^2$.  The following theorem shows the equivalence between ROFU and LinUCB. 
 
 \begin{Thm}\label{thm:lcb}
  With $\mathcal{L}(\theta;D):=\text{MSE}(\theta;D)$, $\mathcal{R}(\theta;D):=\|\theta\|^2+|D|\text{MSE}(\theta;D)$ , $
     \eta(x,a,D):= 1/2$ and $g(w)=\sqrt{w}$.
   We have $OFU^{R}$ is identical to the upper confidence bound used in LinUCB.
 \begin{proof}
 Recall that in LCB, $f_{\theta}(x,a):=\phi(x,a)^\top \theta$. By setting the derivative of Eq. (\ref{eq:rofu_optim}) as zero, we have $
     \phi(x,a)-2\eta(x,a,D)((\mI+\sum_{(x_i,a_i,r_i)\in D}\phi(x_i,a_i)\phi(x_i,a_i)^\top)\theta-\sum_{(x_i,a_i,r_i)\in D}\phi(x_i,a_i)r_i)=0$. 
 With elementary algebra, we can see that $
     \hat{\theta}(x,a)=\theta_{t-1}+\frac{1}{2\eta(x,a,D)}(I+\sum_{(x_i,a_i,r_i)\in D}\phi(x_i,a_i)\phi(x_i,a_i)^\top)^{-1}\phi(x,a)$. 
     
 Let $\mZ:=(\mI+\sum_{(x_i,a_i,r_i)\in D} \phi(x_i,a_i)\phi(x_i,a_i)^\top)$. Given $g(w)=w^{1/2}$, inserting $\hat{\theta}(x,a)$ and $\eta$ into Eq. (\ref{eq:rofu_optim}), we have $
     \text{OFU}^R(x,a) =\phi(x,a)^\top \theta_{t-1} + (\phi(x,a)^\top \mZ^{-1}\phi(x,a))^{\frac{1}{2}}$  which is used as the upper confidence bound in LinUCB. We finish the proof. 
 \end{proof}
 
 \end{Thm}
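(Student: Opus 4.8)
The plan is to exploit that, under the stated choices, the ROFU objective in Eq.~(\ref{eq:rofu_optim}) becomes a strictly concave quadratic in $\theta$, so that $\hat{\theta}(x,a)$ admits a closed form that can be compared directly against the LinUCB bound. First I would substitute $f_\theta(x,a)=\phi(x,a)^\top\theta$ together with $\mathcal{R}(\theta;D)=\|\theta\|^2+|D|\,\text{MSE}(\theta;D)=\|\theta\|^2+\sum_i(\phi(x_i,a_i)^\top\theta-r_i)^2$ into the objective, obtaining $\phi(x,a)^\top\theta-\eta(\|\theta\|^2+\sum_i(\phi(x_i,a_i)^\top\theta-r_i)^2)$. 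Its Hessian is $-2\eta\,\mZ$ with $\mZ:=\mI+\sum_i\phi(x_i,a_i)\phi(x_i,a_i)^\top\succ 0$, so the problem is strictly concave and its unique maximizer is the stationary point.

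Next I would set the gradient to zero, which yields exactly the first-order condition displayed in the statement, and solve it as a linear system in $\theta$. Since $\mZ$ is invertible this gives $\hat{\theta}(x,a)=\mZ^{-1}\sum_i\phi(x_i,a_i)r_i+\frac{1}{2\eta}\mZ^{-1}\phi(x,a)$. I would then identify the first term with the regularized least-squares estimator $\theta_{t-1}$, so that $\hat{\theta}(x,a)=\theta_{t-1}+\frac{1}{2\eta}\mZ^{-1}\phi(x,a)$.

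Finally I would evaluate the bonus. From the closed form, $f_{\hat{\theta}(x,a)}(x,a)-f_{\theta_{t-1}}(x,a)=\phi(x,a)^\top(\hat{\theta}(x,a)-\theta_{t-1})=\frac{1}{2\eta}\phi(x,a)^\top\mZ^{-1}\phi(x,a)$, which for $\eta=1/2$ reduces to $\phi(x,a)^\top\mZ^{-1}\phi(x,a)$. Applying $g(w)=\sqrt{w}$ in Eq.~(\ref{eq:rofu}) then gives $\text{OFU}^R(x,a)=\phi(x,a)^\top\theta_{t-1}+\sqrt{\phi(x,a)^\top\mZ^{-1}\phi(x,a)}$, which is precisely the LinUCB upper confidence bound.

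The matrix algebra is elementary; the step I would treat most carefully is the identification of $\theta_{t-1}$ with the center $\mZ^{-1}\sum_i\phi(x_i,a_i)r_i$ of the LinUCB ellipsoid. This is what makes the quadratic uncertainty difference collapse to $\phi(x,a)^\top\mZ^{-1}\phi(x,a)$ with no residual linear term, and it is also where the constant $\frac{1}{2\eta}=1$ enters so that the resulting bonus matches LinUCB's exactly. The remaining ingredients — invertibility of $\mZ$, strict concavity, and the closed-form inversion — are routine.
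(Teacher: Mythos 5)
Your proposal follows essentially the same route as the paper's proof: set the gradient of the concave quadratic objective to zero, solve the linear system to get $\hat{\theta}(x,a)=\theta_{t-1}+\frac{1}{2\eta}\mZ^{-1}\phi(x,a)$, and plug into Eq.~(\ref{eq:rofu}) with $g(w)=\sqrt{w}$ and $\eta=1/2$ to recover the LinUCB bound. Your two refinements --- verifying strict concavity so the stationary point is the maximizer, and flagging that the argument requires $\theta_{t-1}$ to be the ridge estimator $\mZ^{-1}\sum_i\phi(x_i,a_i)r_i$ (which the theorem's stated choice $\mathcal{L}=\text{MSE}$ only yields if the loss is understood to carry the same $\|\theta\|^2$ regularization as $\mathcal{R}$) --- are points the paper passes over with ``elementary algebra,'' and the second one is a genuine imprecision in the paper worth making explicit.
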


 The derivation in Thm \ref{thm:lcb} can be directly used to show that $\text{OFU}^{R}$ is identical to upper confidence bound used in  KernelUCB as well. And  the equivalence between ROFU and UCB1 can be proved by similar techniques. We postpone the derivation into Appendix. 

\begin{remark}
From the derivation in Thm \ref{thm:lcb}, it is easy to see that when $g(w)=w^b, b>0$, we can also tune $\eta$ to make ROFU identical to LinUCB. However, for $b\neq 1/2$, $\eta$ is a complex function depending on $x,a$ and $D$. Thus, we only consider $g(w)=w^{1/2}$. 
\end{remark}

%  From Table \ref{table:theory}, we can see that though on MAB, Linear/KernelCB,  ROFU is provably efficient for any $g(w)=w^p$ with suitable $\eta$, $\eta$ is a complex function depending on $x,a$ and $D$. Fortunately, with $p=1/2$, $\eta$ is a constant or depends only on the size of $D$. Thus, henceforth, we only consider $OFU^R_{1/2}$. Next, we show that with  $g(w)=\sqrt{w}$, ROFU also enjoys a regret bound in the same order to that of NeuralUCB.

\textbf{Connection to NeuralUCB and sublinear regret for NTK contextual bandit.}

The NeuralUCB \cite{zhou2020neural} uses a neural network to learn the reward function and constructs a UCB by using the DNN-based random feature mappings. For learning the reward function, NeuralUCB exploits the recent progress of optimizing a DNN in the Neural Tangent Kernel (NTK) regime \cite{allen2018convergence,du2019gradient,zou2019improved}, which shows that gradient descent can find global minima of  a loss for a finite set of training samples if the network is sufficiently wide and the learning rate is sufficiently small.  

Intuitively,  NeuralUCB takes the gradient $\nabla_\theta f_\theta(x_t,a)$ as a random feature to represent  the observation $(x_t,a)$. Then, it constructs a UCB as in a linear bandit, i.e., $\text{NeuralUCB}(x_t,a) = f_{\theta_{t-1}}(x_t,a) + \gamma \sqrt{\nabla_\theta f_{\theta_{t-1}}(x_t,a)^\top \tilde{\mZ}_{t-1}^{-1}\nabla_\theta f_{\theta_{t-1}}(x_t,a)/m }$, where $\gamma$ is some predefined constant, $m$ is the network width and $\tilde{\mZ}_{t-1}$ is a matrix with the contextual information. Specifically, NeuralUCB achieves $\tilde{O}(\sqrt{T})$ regret. %, i.e., $\mZ_{t-1} = \lambda \mI + \sum_{s=1}^{t-1}\nabla_\theta f_{\theta_{s-1}}(x_s,a_s)\nabla_\theta f_{\theta_{s-1}}(x_s,a_s)^\top$. Specifically, NeuralUCB achieves $\tilde{O}(\sqrt{T})$ regret.

We next show that ROFU can also achieve a similar regret in the NTK regime by carefully choosing the regularizer $\gR$. To hide the technical exposure of NTK regime, we adopt the same assumptions and the same model setup as those in \cite{zhou2020neural}. We  bound the difference between ROFU and NeuralUCB in the NTK regime, and then show that ROFU can achieve a similar regret bound to NeuralUCB. %Nonetheless, ROFU admits a very efficient implementation (see Sec. \ref{sec:efficient_approximation}) in sharp contrast with NeuralUCB where the inverse of a large matrix $\mZ_t$ is necessary in each step.

At the start, we assume that the network takes the following form
\begin{flalign}
f_\theta(\vx) = \sqrt{m} \mW_L \sigma(\mW_{L-1}\sigma(\cdots\sigma(\mW_1 \vx))),
\end{flalign}
where $\vx$ is composed of $(x,a)$, the $\sigma(\cdot)$ is the point-wise activation function, $L$ is the network depth and $m$ is the network width. Furthermore, $\mW_1 \in \sR^{m\times d}, \mW_l \in \sR^{m\times m}$ for $l\in \{2, ..., L-1\}, \mW_L \in \sR^{1\times m}$ and $\theta$ collects all these matrices into a long vector with dimension $p$.

To see the connection with NeuralUCB, in Algorithm \ref{alg:generalROFU} we choose the loss as
\begin{flalign}
\gL(\theta; D) := \sum_{(x,a,r)\in D} (f_\theta(x,a)-r)^2 +m\lambda \|\theta- \theta_0\|^2, \label{eq:ntk-loss}
\end{flalign}
where $\theta_0$ is the network parameter at initialization, $m$ is the network width. Here, the MSE loss is penalized by the distance between $\theta$ and its initialization $\theta_0$ because the analysis technique for NTK bandit requires that the network parameters stay within a neighborhood around the initialization \cite{zhou2020neural}. We note that the penalizing coefficient written as $m\lambda$ is for proof convenience and the comparison with NeuralUCB. We further choose the regularizer as
\begin{flalign}
\gR(\theta; D_{t-1}) := \frac{1}{\eta}(f_{\theta}(x_t,a) - \tilde{f}_{\theta}(x_t,a;\theta_{t-1})) + \tilde{\gR}(\theta, D_{t-1};\theta_{t-1}), \label{eq:ntk-regularizer}
\end{flalign}
where $\theta_{t-1}$ is the last iterate that minimizes the loss on $D_{t-1}$. Moreover, $\tilde{f}_{\theta}(x_t,a;\theta_{t-1})=f_{\theta_{t-1}}(x_t,a) + \langle \nabla_\theta f_{\theta_{t-1}}(x_t,a),\theta -\theta_{t-1}\rangle$ and $\tilde{\gR}(\theta, D_{t-1};\theta_{t-1}) :=\sum_{(x,a,r)\in D_{t-1}} (f_{\theta_{t-1}}(x,a) + \langle \nabla_\theta f_{\theta_{t-1}}(x,a),\theta -\theta_{t-1}\rangle -r)^2 + m\lambda \|\theta- \theta_0\|^2$ are the first order Taylor expansions of $f_{\theta}(x_t,a)$ and $\gL(\theta; D_{t-1})$  at $\theta_{t-1}$, respectively. Then equivalently, $\hat{\theta}(x_t,a)$  is calculated by 
\begin{flalign}
\hat{\theta}(x_{t},a) := \argmax \tilde{f}_{\theta}(x_t,a;\theta_{t-1}) - \eta \tilde{\gR}(\theta, D_{t-1};\theta_{t-1}), \label{eq:ntk-theta-hat}
\end{flalign}
 where  $\tilde{f}_{\theta}(x_t,a;\theta_{t-1})$ and $\tilde{\gR}(\theta, D_{t-1};\theta_{t-1})$ are defined as above.

% Furthermore, we assume that the network takes the following form
% \begin{flalign}
% f_\theta(\vx) = \sqrt{m} \mW_L \sigma(\mW_{L-1}\sigma(\cdots\sigma(\mW_1 \vx))),
% \end{flalign}
% where $\vx$ is composed of $(x,a)$, the $\sigma(\cdot)$ is the point-wise activation function, $L$ is the network depth and $m$ is the network width. Furthermore, $\mW_1 \in \sR^{m\times d}, \mW_l \in \sR^{m\times m}$ for $l\in \{2, ..., L-1\}, \mW_L \in \sR^{1\times m}$ and $\theta$ collects all these matrices into a long vector with dimension $p$. 
We assume that the network is trained in the NTK regime  with learning rate $\beta$ (see details in Algorithm \ref{alg:ntktrainNN} of Appendix \ref{app:sec:ntkregret}). Then  if choosing $g(w) = \sqrt{w}$ in Eq. (\ref{eq:rofu}) and $\eta = 1/(2\gamma^2)$ where $\gamma$ is a factor same as the one used in NeuralUCB, we have a regret bound for ROFU  as follows.

\begin{restatable}{Thm}{ntkregret}\label{thm:ntk}

For convenience, let $L$ denote the network depth, $T$ denote the time horizon, $A$ denote the action set  and $\lambda_0$ denote the minimal eigenvalue of the neural tangent kernel matrix at initialization. 
If the loss and regularizer are chosen as Eq. (\ref{eq:ntk-loss}) and Eq. (\ref{eq:ntk-regularizer}) and the network width satisfies $m\ge \text{poly}(T,|A|, L, \lambda^{-1}, \lambda_0^{-1})$ and the learning rate $\beta \le C(mTL+m\lambda)^{-1}$, then with high probability, the regret of ROFU satisfies that $R_T \le \tilde{O}(\sqrt{T})$ where $\tilde{O}(\cdot)$ hides the $\log T$ terms, the dependence on the problem dimension and other factors that are irrelevant with $T$. %, where $L$ is the network depth, $T$ is the time horizon, $A$ is the action set, $|A|$ is the cardinality of $A$, $\lambda_0$ is the minimal eigenvalue of the neural tangent kernel matrix at initialization and $\tilde{O}(\cdot)$ hides the $\log T$ terms, the dependence on the problem dimension and other factors that is irrelevant with $T$. 
 %and $\tilde{O}(\cdot)$ hides the $\log T$ terms, the dependence on the problem dimension and other factors that is irrelevant with $T$. 
% where the $\hat{d}$ is the effective dimension and defined as 
% \begin{flalign}
% \tilde{d} = \frac{\log \det (\mI +\mH/\lambda)}{\log (1+TA/\lambda)}.
% \end{flalign}
\end{restatable}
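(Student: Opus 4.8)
The plan is to prove Theorem~\ref{thm:ntk} by showing that $\text{OFU}^R(x_t,a)$ coincides, up to errors that vanish as the width $m$ grows, with the upper confidence bound of NeuralUCB, and then to transport the $\tilde O(\sqrt T)$ regret guarantee of \cite{zhou2020neural} to ROFU. The starting point is already recorded in Eq.~(\ref{eq:ntk-theta-hat}): substituting the regularizer (\ref{eq:ntk-regularizer}) into (\ref{eq:rofu_optim}), the $\tfrac{1}{\eta}(f_\theta-\tilde f_\theta)$ term is designed to cancel the nonlinearity, so that $\hat\theta(x_t,a)$ is the maximizer of a purely quadratic objective. Exactly as in the proof of Theorem~\ref{thm:lcb}, I would set the gradient to zero and solve in closed form. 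Writing $g_{t-1}(x,a):=\nabla_\theta f_{\theta_{t-1}}(x,a)$ and $\tilde{\mZ}_{t-1}:=\sum_{(x,a,r)\in D_{t-1}} g_{t-1}(x,a)g_{t-1}(x,a)^\top + m\lambda\mI$, the stationarity condition gives
\[
\hat\theta(x_t,a)=\theta_{\min}+\tfrac{1}{2\eta}\tilde{\mZ}_{t-1}^{-1}g_{t-1}(x_t,a),
\]
where $\theta_{\min}$ minimizes the linearized loss $\tilde{\gR}$. Plugging this into the linearized value $\tilde f_{\hat\theta}(x_t,a;\theta_{t-1})$ and using $\eta=1/(2\gamma^2)$ yields $\tilde f_{\hat\theta}(x_t,a;\theta_{t-1}) - f_{\theta_{t-1}}(x_t,a)\approx \gamma^2\, g_{t-1}(x_t,a)^\top \tilde{\mZ}_{t-1}^{-1} g_{t-1}(x_t,a)$, whose square root under $g(w)=\sqrt w$ reproduces the NeuralUCB bonus $\gamma\sqrt{g_{t-1}^\top\tilde{\mZ}_{t-1}^{-1}g_{t-1}}$, up to the $1/m$ normalization.

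The second step is to replace the idealizations above by the true quantities and control the resulting errors with NTK machinery. There are three gaps: (i) $\theta_{t-1}$, produced by gradient descent on the true loss $\gL$, must be shown close to $\theta_{\min}$, the minimizer of the linearized loss; (ii) the true network value $f_{\hat\theta}(x_t,a)$ must be shown close to its linearization $\tilde f_{\hat\theta}(x_t,a;\theta_{t-1})$; and (iii) the feature matrix built at $\theta_{t-1}$ must be shown close to the one NeuralUCB uses. All three follow from the standard NTK facts that, for $m\ge\text{poly}(T,|A|,L,\lambda^{-1},\lambda_0^{-1})$ and $\beta\le C(mTL+m\lambda)^{-1}$, gradient descent keeps every iterate within an $O(\text{poly}(T)/\sqrt m)$ neighborhood of initialization and the network is uniformly well-approximated there by its first-order Taylor expansion. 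I would invoke these as lemmas in the appendix, following \cite{zhou2020neural,allen2018convergence,du2019gradient}, rather than rederive them.

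With a uniform bound $|\text{OFU}^R(x_t,a)-\text{NeuralUCB}(x_t,a)|\le \epsilon_m$ in hand, the regret follows from the standard optimism argument. Since the NeuralUCB bonus is, with high probability, a valid confidence width, i.e.\ $f_{\theta^*}(x_t,a)\le \text{NeuralUCB}(x_t,a)$ for all $a$, the perturbed quantity $\text{OFU}^R$ is a valid UCB up to $\epsilon_m$; hence for the ROFU-selected $a_t$ one gets $f_{\theta^*}(x_t,a_t^\star)-f_{\theta^*}(x_t,a_t)\le 2\,\text{bonus}(x_t,a_t)+O(\epsilon_m)$, where $a_t^\star$ is the optimal action. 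Summing over $t$, the bonus terms are controlled by the effective-dimension / elliptical-potential argument of NeuralUCB to give $\tilde O(\sqrt T)$, while the accumulated approximation error is $T\epsilon_m$, which is driven below any fixed tolerance by the width condition on $m$.

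I expect the main obstacle to be gap (ii) combined with the non-Lipschitz square root in $g$. Because $g(w)=\sqrt w$ is only H\"older-$\tfrac12$ at the origin, an additive error $\delta$ inside the square root only guarantees an $O(\sqrt\delta)$ error in the bonus (via $|\sqrt x-\sqrt y|\le\sqrt{|x-y|}$). Care is therefore needed to show that the linearization error of $f_{\hat\theta}$ is small enough to stay $o(1)$ even after the square root, uniformly over all rounds and actions, and to verify that $\theta_{\min}\approx\theta_{t-1}$ so that the cross term $\langle g_{t-1}(x_t,a),\theta_{\min}-\theta_{t-1}\rangle$ is negligible. Quantifying these errors as explicit polynomials in $m^{-1/2}$ and checking they survive the square root is where the bulk of the technical work lies.
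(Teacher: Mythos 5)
Your proposal follows essentially the same route as the paper's proof: it exploits the special regularizer in Eq.~(\ref{eq:ntk-regularizer}) to reduce the inner maximization to a linearized quadratic problem with closed-form solution $\hat\theta(x_t,a)=\theta_{t-1}+\frac{1}{2\eta m}\mZ_{t-1}^{-1}h_{\theta_{t-1}}(x_t,a)$, bounds the difference between $\text{OFU}^R$ and the NeuralUCB index using the same NTK lemmas (Taylor-approximation accuracy from \cite{cao2019generalization}, gradient stability near initialization from \cite{allen2019convergence}, and closeness of the gradient-feature matrix to NeuralUCB's), and then transports the $\tilde O(\sqrt T)$ regret of \cite{zhou2020neural}. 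Your flagged obstacle—the H\"older-$\tfrac12$ behavior of $g(w)=\sqrt w$ forcing a $\sqrt{\delta}$ degradation of the linearization error—is exactly what appears in the paper's final displayed estimate, and is absorbed there, as you propose, by the polynomial width requirement on $m$.
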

 
% \input{contents/ntkbandit}
 %We will empirically examine whether $OFU^R_{1/2}$ can provide a useful upper confidence bound for more complex DNN architectures.
The  idea of proof is to connect the formula of ROFU with the NeuralUCB and then bound their difference such that the order of the  regret bound is not affected. We leave the full proof into Appendix \ref{app:sec:ntkregret}. This justifies that the ROFU can also achieve a near optimal regret bound in the NTK regime. Nonetheless, in sharp contrast with NeuralUCB where computing the inverse of a large matrix $\mZ_{t-1}$ is necessary in each step, ROFU admits a very efficient implementation as shown below.

\subsection{An efficient algorithm to compute $\text{OFU}^R$}\label{sec:efficient_approximation}

In order to apply ROFU, we have to efficiently  
solve or approximate Eq.~(\ref{eq:rofu_optim}). In cases of MAB, KCB, we can calculate closed-form solutions for Eq.~(\ref{eq:rofu_optim}). When $f_{\theta}$ is a deep neural network, it is time-consuming to exactly solve Eq.~(\ref{eq:rofu_optim}). Naturally, one can use gradient descent methods to approximately solve Eq.~(\ref{eq:rofu_optim}). However, it is still not manageable to optimize from scratch for every $(x_t,a)$. We propose that the optimization starts from $\theta_{t-1}$ for $(x_t,a)$. %to sidestep this problem by

More specifically, when $f_{\theta}$ is a neural network and $\mathcal{L}$ is differentiable,  we can optimize the parameters $\theta$  with gradient ascent. We approximately solve Eq.~(\ref{eq:rofu_optim}) by executing a few steps of gradient ascent starting from $\theta_{t-1}$. That is, $\hat{\theta}_{j+1}=\hat{\theta}_{j}+\kappa \nabla_{\theta} (f_{\hat{\theta}_j}(x_t,a)-\eta \mathcal{R}(\hat{\theta}_j;D))$ with $\hat{\theta}_0:=\theta_{t-1}$, where $\kappa$ is the step size. The above implementation is summarized in Alg. \ref{alg:acc_rofu}. %As for the calculation of $\theta_{t-1}$, we also use gradient descent which starts from $\theta_{t-2}$ \zyc{TODO}. 

Alg. \ref{alg:acc_rofu} essentially performs a local search around $\theta_{t-1}$. Indeed it brings extra benefit for the optimization by starting from $\theta_{t-1}$ in this case. This is because intuitively  $\theta_{t-1}$  often gets closer to $\hat{\theta}(\cdot,a)$  when $t$ is larger. For example, in MAB, $\theta_{t-1}(\cdot,a)=\Bar{\mu}_{t-1}(a)$ where $\Bar{\mu}_{t-1}(a)$ is the empirical mean of action $a$ and $\hat{\theta}_{t-1}(\cdot,a)=\Bar{\mu}_{t-1}(a)+1/n(a)$ where $n(a)$ is the number of pulls on $a$. Thus, $\theta_t$ is close to $\hat{\theta}$ when $n(a)$ is large. 
% It is obvious that Alg. \ref{alg:acc_rofu} is essentially a local search around $\theta_{t-1}$. The intuition is that $\mathcal{L}(D;\theta_{t-1})$ is small, then we can expect $\mathcal{R}(D;\theta_{t-1})$ is also small as both $\mathcal{L}$ and $\mathcal{R}$ are variants of MSE from Table \ref{table:theory}.  Moreover, as time step $t$ increases, the weight of $\mathcal{R}$ in Eq. (\ref{eq:rofu_optim}) also increases \huishuai{the weight of R is $\eta$, right? not change over time}. Thus,  $\hat{\theta}$ gets close to $\theta_{t-1}$ when $t$ is large. Therefore, $\theta_{t-1}$ is a good initialization to compute $\hat{\theta}$. For example, in MAB, $\theta_{t-1}(a)=\Bar{\mu}_{t-1}(a)$ where $\Bar{\mu}_{t-1}(a)$ is the empirical mean of action $a$ and $\hat{\theta}_{t-1}(a)=\Bar{\mu}_{t-1}(a)+1/n_a$ where $n_a$ is the number of pulls on $a$. %Thus, in MAB, $\theta_{t-1}$ and $\hat{\theta}$ is close when $n_a$ is large. 

It is easy to see that the time complexity of Alg. \ref{alg:acc_rofu} is $O(TM|A|p)$. In Sec. \ref{sec:experiment}, we can see that the regret of ROFU is low in practice with very small $M$. 

\begin{remark}\label{remark:convexity}
 For MAB, linear/kernel bandit and NTK bandit,  $-(f_{\theta}(x_t,a)-\mathcal{R}(\theta;D))$ is strongly convex and smooth. Assuming that it is $\alpha$ strongly convex and $\zeta$ smooth,  by Thm 2.1 in \cite{needell2014stochastic}, we have $\mathbb{E}\|\hat{\theta}_M(x_t,a)-\hat{\theta}(x_t,a)\|^2\leq (1-2\kappa\alpha(1-\kappa\zeta))^M\|\theta_{t-1} - \hat{\theta}(x_t,a)\|^2 + \frac{\kappa\nu^2}{\alpha(1-\kappa\zeta)}$, where $\mathbb{E}$ is over the randomness of SGD, $\kappa$ is the learning rate and  $\nu^2$ is the expected gradient norm square at  $\hat{\theta}(x_t,a)$.
\end{remark}%\ssh{notation: $\gamma$ conflict with regret table ntk}

We note that the convex and smooth property for NTK bandit is due to the choice of $\gR(\theta;D)$ in Eq.~(\ref{eq:ntk-regularizer}). 
% \textbf{Remark}: For MAB, linear/kernel bandit and NTK bandit,  $f_{\theta}(x_t,a)-\mathcal{R}(D;\theta)$ is strongly convex and smooth. Then, according to Thm 2.1 in \cite{needell2014stochastic}, we have $\|\hat{\theta}_M(x_t,a)-\hat{\theta}(x_t,a)\|^2\leq \|\theta_{t-1} - \hat{\theta}(x_t,a)\|^2\exp(-C_1M)+C_2$ where $C_1$ is a constant describing smoothness and convexity and $C_2$ is a constant introduced by the variance of stochastic gradients and learning rate.
Based on Remark \ref{remark:convexity}, we can expect Alg. \ref{alg:acc_rofu} converges fast for these cases, especially, which can leverage the property that $\|\theta_{t-1} - \hat{\theta}(x_t,a)\|$ gets smaller as $t$ gets larger.  When $f_{\theta}$ is general and complex, there is no theoretical guarantee on the convergence rate, but in practice, stochastic gradient descent with a good initialization usually  converges fast. We note that NeuralUCB can have a fast implementation by approximating $\mZ$ with its diagonal elements, which, however, can significantly compromise the regret in practice. Please see Sec. \ref{sec:real-world} for empirical evaluations. %NeuralUCB also admits a fast implementation by approximating $\mZ$ with a diagonal matrix which consists of the diagonal elements of $\mZ$. However, the approximation for NeuralUCB can significantly hurts its regret in practice. Please see Sec. \ref{sec:real-world} for empirical evaluations. 

\begin{algorithm}[hbt!]
\caption{An efficient implementation to estimate $\text{OFU}^R$}
\label{alg:acc_rofu}  
\begin{algorithmic}[1]

\STATE \textbf{Input}:   Dataset $D_{t-1}$,  $\theta_{t-1}=\argmin_{\theta}\mathcal{L}(\theta;D_{t-1})$ and context-action pair $x_t, a$. Learning rate $\kappa$ and training steps $M$.
    \STATE Observe context $x_t$.
        \STATE Set $\hat{\theta}_0(x_t,a):=\theta_{t-1}$.
        \FOR{$j=1,..., M$}
            \STATE \label{line:gradient} $\hat{\theta}_j(x_t, a):=\hat{\theta}_{j-1}(x_t,a)+\kappa \tilde{\nabla}(f_{\hat{\theta}_{j-1}(x_t, a)}(x_t,a) - \eta\mathcal{R}(\hat{\theta}_{j-1}(x_t,a);D_{t-1}) )$, where $\tilde{\nabla}$ is an estimator of  gradient. %\ssh{$\hat{\theta}_{j}(x_t, a) = \hat{\theta}_{j-1}(x_t, a) + \kappa \tilde{\nabla}(f_{\hat{\theta}_{j-1}(x_t, a)}(x_t, a) - L(\hat{\theta}_{j-1}(x_t, a); D_{t-1}))$?}
            
        \ENDFOR
    \RETURN $\text{OFU}^R(x_t,a):=f_{\theta_{t-1}}(x_t,a)+\sqrt{\max(0,f_{\hat{\theta}_M(x_t,a)}(x_t,a)-f_{\theta_{t-1}}(x_t,a))}$.
\end{algorithmic}
\end{algorithm}

\section{Related work}

The contextual bandit problem has been extensively studied and applied in machine learning. Besides OFU, the most used and studied contextual bandit algorithms can be mainly divided into two categories: $\epsilon$-greedy \cite{sutton2018reinforcement} and Thompson sampling \cite{thompson1933likelihood}.

$\epsilon$-greedy greedily selects the action $a=\argmax_{a'}f_{\theta_{t-1}}(x, a')$ with probability $1-\epsilon$ and randomly selects an action to explore with probability $\epsilon$. While being sub-optimal on EE trade-off, $\epsilon$-greedy is simple, general and computationally efficient. Thus, $\epsilon$-greedy is the most widely used algorithm in complex tasks. 

The basic idea of Thompson sampling \cite{thompson1933likelihood} is to maintain a posterior over parameters and make decisions according to the samples from the posterior. However, Thompson sampling can be applied only if we can access the posterior and use approximate inference methods which can be computationally inefficient and may significantly hurt the performance of regret \cite{deep_bayesian_bandit_showdown,phan2019thompson}.

There are several works extending OFU and Thompson sampling to the case of non-linear contextual bandits. NeuralUCB \cite{zhou2020neural} and Neural-TS \cite{zhang2021neural} extend OFU and TS to NTK bandits and they provide regret bounds in the order of $\tilde{O}(\sqrt{T})$. Our analysis for NTK bandit is largely inspired by their work. However, as mentioned in Sec. \ref{sec:preliminary}, both NeuralUCB and NeuralTS need to calculate the inversion of a matrix with dimension equal to the number of parameters in DNN, which is computationally inefficient for large DNN models. Moreover, the neural-linear model is studied in \cite{zahavy2019deep,deep_bayesian_bandit_showdown}, where they take the output of the second last layer of a DNN as feature map. These algorithms sometimes work well, but have no theoretical guarantee on regret.

\section{Experiment}\label{sec:experiment}

We now empirically evaluate ROFU. For simplicity, in all our experiments, we set $\eta=1, g(w)=\sqrt{w}$, $\mathcal{L}(\theta;D)=\text{MSE}(\theta;D)$ and $\mathcal{R}(\theta;D)=|D|\text{MSE}(\theta;D)$. That is, $\theta_{t-1}$ is trained to minimize $\text{MSE}(\theta;D)$ with standard optimizer \footnote{We train $\theta_{t-1}$ with stochastic gradient descent starting from $\theta_{t-2}$ in all the experiments.} and $\hat{\theta}(x,a)$ is trained to maximize $f_{\theta}(x,a)- |D| \text{MSE}(\theta;D)$ using Alg. \ref{alg:acc_rofu}.

 \begin{figure*} [t!]
\subfigure{
\includegraphics[width=1.\textwidth]{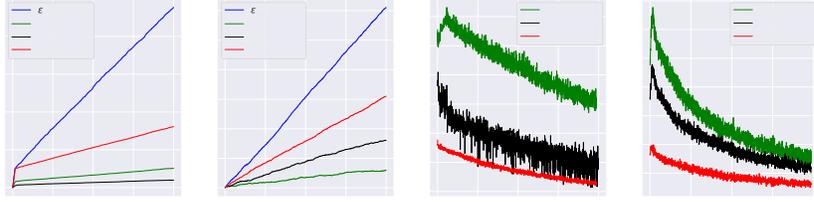}  
}
\caption{Ablation study on MLP and ResNet bandits. Notation: $M=1/5/10$ means that we run $1/5/10$ gradient descent updates in Alg. \ref{alg:acc_rofu}.}
 \label{fig:ablation}
\end{figure*}

\subsection{Analysis on MLP and ResNet bandits}\label{sec:abl_study}
As discussed in Sec. \ref{sec:introduction}, a contextual bandit algorithm should be efficient in trading off EE when reward is generated from a complex function  while keeping a low cost on computational resources. From Alg. \ref{alg:acc_rofu}, we can see that the time complexity of ROFU is determined by the training step $M$. Thus, we here evaluate the regret of ROFU when $M$ is small. 

% In this section, we answer two questions empirically: (1) Can ROFU be applied to functions beyond MLP and NTK regime? (2) What is the impact of hyper-parameter $K$ in Alg. \ref{alg:acc_rofu}? As for other hyper-parameters such as learning rate and batch size, we can tune them with standard techniques for deep learning. 

To evaluate the performance of ROFU in Alg. \ref{alg:acc_rofu} on complex tasks, we consider two contextual bandits with a DNN as the simulator. That is, $r(x_t, a)$  is generated by a DNN model. We consider two popular  DNN architectures to generate rewards: 2-layer MLP and 20-layer ResNet with CNN blocks and Batch Normalization as in \cite{he2016deep}.  
We summarize other information of the two tested bandits in Table. \ref{table:info}.

\begin{table}[ht!]\label{table:nonlinear-cb-results-table}
\begin{center}
\begin{adjustbox}{width=\textwidth} 
	\begin{tabular}{c c  c  c  c c  c }
	    \Xhline{2\arrayrulewidth}
     	Bandit	 & \small Layer & \small Context Dim  & \small \# Arms & \small NN Parameters & Context Distribution & Noise  \\
     	\hline
     	\small MLP & $2$ & $10$  & $10$ & Random & Gaussian & $\mathcal{N}(0, 0.05)$  \\
     	\small ResNet & $20$ & $3\times 32\times 32$ & $10$ & Train on Cifar10 & Uniform & $\mathcal{N}(0, 0.5)$ \\
        \hline
	\end{tabular}
\end{adjustbox}
\end{center}
\caption{Basic information about MLP and ResNet bandits.}
\label{table:info}
\end{table}

We use  DNNs with larger size for training in Alg. \ref{alg:acc_rofu}. More specifically, for MLP-bandit,   $f_\theta$ is chosen as a 3-layer MLP and for ResNet20-bandit, $f_\theta$ is chosen as ResNet32. Each experiment is repeated for 16 times.  We present the regret and confidence bonus in Fig. \ref{fig:ablation}. From Fig. \ref{fig:ablation}, we can see that (1) ROFU can achieve a  small regret on both tasks with a considerably small $M$ even for very large DNN model; (2) The confidence bonus monotonically increases with $M$ . For each $M$, the confidence bonus converges to $0$ as expected. Moreover, while the regret seems sensitive to the value of $M$, the regrets of ROFU with $M=5,10$ are much smaller than the case of $M=1$ and $\epsilon$-greedy.% on both tasks. %The results show that Alg. \ref{alg:acc_rofu} can achieve a small regret on complex contextual bandits using a considerably small $M$. %\ssh{with} a manageable running time. \ssh{where is the running time?}
%\ssh{Remember to switch $K$ to $M$}

 \begin{figure*} [t!]
\centering
\subfigure{\label{fig:covertype}\includegraphics[width=1.\textwidth]{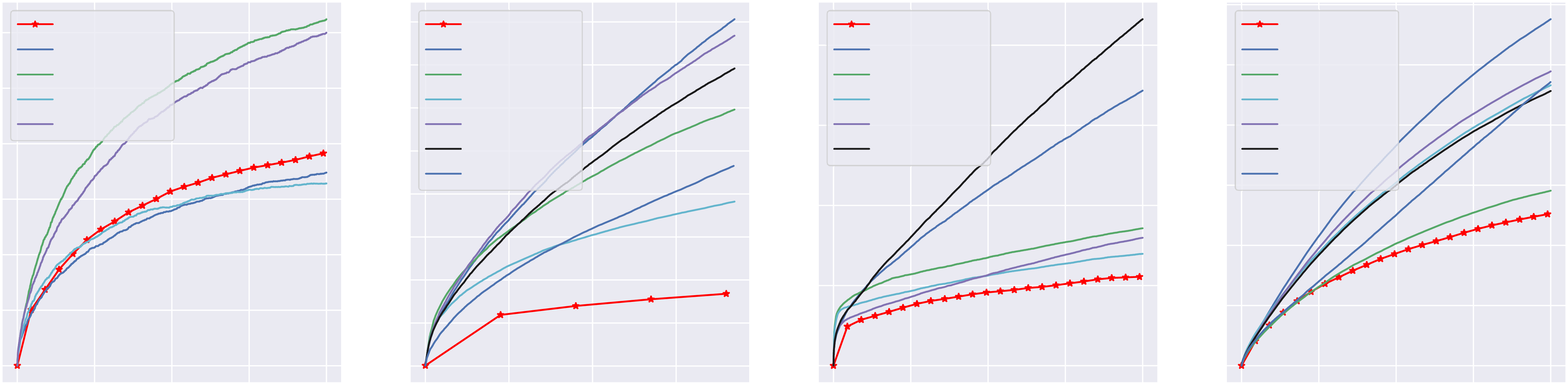}
}
 \quad
 \hspace*{-.0in}
 \subfigure{\label{fig:covertype}\includegraphics[width=.75\textwidth]{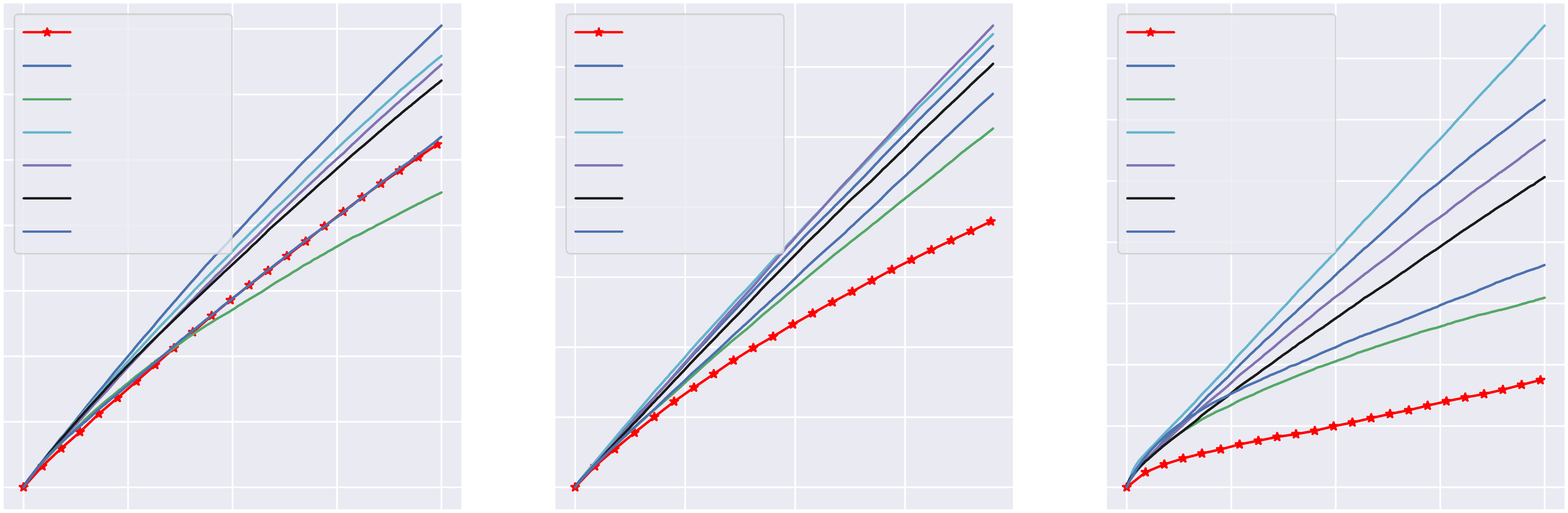}
}

\caption{Evaluations on non-linear contextual bandits.  }
 \label{fig:ncb}
\end{figure*}

\subsection{Performance comparison on real-world datasets}\label{sec:real-world}

To evaluate ROFU against powerful baselines, we conduct experiments on  contextual bandits which are created from real-world datasets, following the setting in \cite{deep_bayesian_bandit_showdown}. For example, suppose that   $D:=\{(x_t, c_t)\}_{t\leq T}$ is a $K$-classification dataset where $x_t$ is the feature vector and $c_t\in [K]$ is the label. We create a  contextual bandit problem as follows: at time step $t\leq T$, the agent observes context $x_t$, and then  takes an action $a_t$. The agent receives high reward if it successfully predicts the label of $x_t$. For non-classification dataset, we can turn it into contextual bandit in similar ways. For the details of these bandits, please refer to \cite{deep_bayesian_bandit_showdown}. 
%\ssh{not all real-world datasets are classification problems in Deep Bayesian Bandits Showdown, maybe here you need to rewrite the setting. (In previous version, statlog and covertype are classification. Here, mushroom, financial, jester is not classification)}

For baselines, we consider NeuralUCB \cite{zhou2020neural} and Thompson sampling variants from \cite{deep_bayesian_bandit_showdown}. It is noteworthy that we only evaluate the algorithms in \cite{deep_bayesian_bandit_showdown} with relatively small regrets. We directly run the code provided by the authors. For ROFU, we fix $M=5$ for all experiments. For other hyper-parameters, please see Appendix \ref{app:detail}. We repeat each algorithm for $16$ times on each dataset. 

We report the regret $R_T = \mathbb{E}\sum_{t\leq T}r(x_t,a'_t) - \mathbb{E}\sum_{t\leq T}r(x_t, a_t)$ where $a'_t=\argmax_{a\in A}f_{\theta'}(x_t,a)$ and $\theta'$ is the parameter trained on the dataset. For hyperparameters tuning and more details, please see Appendix \ref{app:detail}. The results are presented in Fig. \ref{fig:ncb} and Table \ref{table:final_regrets}. 
We found that the regret of NeuralUCB is occasionally linear. This might be because that NeuralUCB uses a diagonal matrix to approximate $Z$ to accelerate. Moreover, we can see that ROFU significantly outperforms these baselines in terms of regret. 

\begin{table}[H]\label{table:nonlinear-cb-results-table}
\begin{center}
\begin{adjustbox}{width=1.\textwidth} 
	\begin{tabular}{c c  c  c  c c  c  c  c}
	    \Xhline{2\arrayrulewidth}
     		 & \small Mean & \small Census  & \small Jester & \small Adult & \small Covertype & \small Statlog &\small Financial &\small Mushroom \\
     	\hline
     	\small Dropout & $1.75\scriptstyle\pm 0.80$ & $1.51\scriptstyle\pm  0.10$  & $1.34\scriptstyle\pm 0.14$ & $\mathbf{1.00\scriptstyle\pm 0.09}$ & $1.14\scriptstyle\pm 0.13$ & $1.54\scriptstyle\pm 0.87$ & $3.50\scriptstyle\pm 0.60$ & $2.21\scriptstyle\pm0.42$ \\
     	\small Bootstrap & $2.23\scriptstyle\pm 1.00$ & $2.51\scriptstyle\pm 0.16$ & $1.72\scriptstyle\pm 0.11$ & $1.43\scriptstyle\pm 0.10$ & $1.93\scriptstyle\pm 0.13$ & $1.43\scriptstyle\pm 1.57$ & $4.52\scriptstyle\pm 2.29$ & $2.04\scriptstyle\pm 0.48$\\
     	\small ParamNoise & $2.30\scriptstyle\pm 1.12$ & $2.28\scriptstyle\pm 0.23$ & $1.59\scriptstyle\pm 0.14$ & $1.37\scriptstyle\pm 0.10$ & $1.80\scriptstyle\pm 0.20$ & $3.88\scriptstyle\pm 6.40$ & $4.07\scriptstyle\pm 1.76$ & $1.06\scriptstyle\pm 0.32$ \\
     	\small NeuralLinear & $1.82\scriptstyle\pm 0.69$ & $3.24\scriptstyle\pm 0.47$ & $1.70\scriptstyle\pm 0.13$ & $1.46 \scriptstyle\pm 0.12$ & $1.84\scriptstyle\pm 0.19$ & $1.25\scriptstyle\pm 0.11$ & $2.25\scriptstyle\pm 0.35$ & $\mathbf{1.00\scriptstyle\pm 0.38}$  \\
     	\small Greedy & $2.47\scriptstyle\pm1.12$ & $2.76\scriptstyle\pm 1.24$ & $1.65\scriptstyle\pm 0.10$ & $1.56\scriptstyle\pm 0.11$ & $2.27\scriptstyle\pm 0.23$ & $3.08\scriptstyle\pm 4.91$ & $4.74\scriptstyle\pm 2.31$ & $1.20\scriptstyle\pm 0.41$ \\
     	\small NeuralUCB & $9.76\scriptstyle\pm 14.02$ & $1.72\scriptstyle\pm 0.12$ & $1.47\scriptstyle\pm 0.08$ & $1.18\scriptstyle\pm 0.05$ & $1.86\scriptstyle\pm 0.16$  & $41.42\scriptstyle\pm 69.51$ & $2.74\scriptstyle\pm 0.50$ & $17.29\scriptstyle\pm 7.45$ \\
     	\small ROFU (ours) & $\mathbf{1.05\scriptstyle\pm 0.09}$ & $\mathbf{1.00\scriptstyle\pm 0.09}$ & $\mathbf{1.00\scriptstyle\pm 0.20}$ & $1.17\scriptstyle\pm 0.06$ & $\mathbf{1.00 \scriptstyle\pm 0.14}$ & $\mathbf{1.00\scriptstyle\pm 0.24}$ & $\mathbf{1.00\scriptstyle\pm 0.60}$ & $1.22\scriptstyle\pm 0.37$\\
        \hline
	\end{tabular}
\end{adjustbox}
\end{center}
\caption{The final regret of each algorithm. The regrets are normalized according to the algorithm with smallest regret.}
\label{table:final_regrets}
\end{table}

% \begin{table}[H]\label{table:nonlinear-cb-results-table}
% \begin{center}
% \begin{adjustbox}{width=1.\textwidth} 
% 	\begin{tabular}{c c c  c  c  c c  c  }
% 	    \Xhline{2\arrayrulewidth}
%      		  &\small ROFU & \small Neural-UCB  & \small Dropout & \small ParamNoise & \small Greedy & \small Bootstrap &\small Neural Linear  \\
%      	\hline
%      	   Runing time (s)&5259 &1589 & 525 & 306 & 771  & 729 & 3612 \\
%         \hline
% 	\end{tabular}
% \end{adjustbox}
% \end{center}
% \caption{Running time of each algorithm for $20000$ rounds. All experiments are run on Nvidia Tesla P-100 GPU.}
% \label{table:final_running_time}
% \end{table}
\section*{Conclusion and future work}

In this work, we propose an OFU variant, called ROFU, which can be applied to non-linear contextual bandit. We show that the regret of ROFU is (near-)optimal on various contextual bandit models. Moreover, we propose an efficient algorithm to approximately compute the upper confidence bound. Thus, ROFU is efficient in both computation and EE trade-off, which are empirically verified by our experimental results.

EE trade-off is a fundamental problem that lies in the heart of sequential decision making.   However, the huge computational cost of (near-)optimal EE trade-off algorithms significantly limits the application, especially on complex domain. We hope our method could inspire more algorithms to efficiently trade-off EE for sequential decision-making tasks beyond contextual bandit, such as deep reinforcement learning.

%\newpage 

% \bibliography{example_paper}%,optimization}
% \bibliographystyle{abbrv}

%\input{contents/checklist}
%%%%%%%%%%%%%%%%%%%%%%%%%%%%%%%%%%%%%%%%%%%%%%%%%%%%%%%%%%%%

\newpage

\appendix
\section{Analysis on multi-armed bandit}
The following theorem presents the equivalence between ROFU and UCB1 \cite{auer2002finite}. 

\begin{Thm}\label{thm:ucb1}
Let $n_a$ denote the number of pulls on action $a$ and $\Bar{\theta}_a$ denote the empirical mean of rewards on action $a$. With $\mathcal{L}(\theta;D)=\text{MSE}(\theta;D)$, $\mathcal{R}(\theta;D)=|D|\text{MSE}(\theta;D)$, $\eta(a)=\frac{1}{2n_a(8\log |D|/n_a)^{1/2b}}$ and $g(w)=w^b$, we have $OFU^R(a)=\Bar{\theta}_a + \sqrt{8\log t/n_a}$ which is used in UCB1.
\begin{proof}
Let the derivative of Eq. (\ref{eq:rofu_optim}) equal zero, we have $\hat{\theta}(a)=8\log t/n_a + \Bar{\theta}_a$. Inserting to Eq. (\eqref{eq:rofu}), we finish the proof. 
\end{proof}
\end{Thm}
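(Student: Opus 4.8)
The plan is to specialize the general ROFU machinery to the multi-armed bandit, where there is no context and the reward function degenerates to $f_\theta(x,a)=\theta_a$, i.e. $\theta$ carries exactly one scalar per arm. Under this parametrization the mean-squared loss and the regularizer both \emph{decouple across arms}: writing $\mathcal{R}(\theta;D)=|D|\,\text{MSE}(\theta;D)=\sum_{t\le |D|}(\theta_{a_t}-r_t)^2=\sum_{a'}\sum_{t:a_t=a'}(\theta_{a'}-r_t)^2$, only the block indexed by $a'$ involves $\theta_{a'}$. First I would record that $\theta_{t-1}=\argmin_\theta \text{MSE}(\theta;D)$ sets each coordinate to the empirical mean, so $f_{\theta_{t-1}}(a)=(\theta_{t-1})_a=\bar\theta_a$.

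Next I would solve the inner optimization in Eq. (\ref{eq:rofu_optim}). Since $f_\theta(a)=\theta_a$ depends only on the single coordinate $\theta_a$, the remaining coordinates of $\hat\theta(a)$ are irrelevant to $f_{\hat\theta(a)}(a)$, and it suffices to maximize over $\theta_a$ alone. The objective $\theta_a-\eta(a)\sum_{t:a_t=a}(\theta_a-r_t)^2$ is a concave quadratic in $\theta_a$ (leading coefficient $-\eta(a)n_a<0$), so its unique maximizer is the critical point. Setting the derivative to zero gives $1-2\eta(a)\,n_a(\theta_a-\bar\theta_a)=0$, whence $\hat\theta(a)_a=\bar\theta_a+\frac{1}{2\eta(a)n_a}$.

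The remaining step is bookkeeping of the exponents. Substituting $\eta(a)=\frac{1}{2n_a(8\log|D|/n_a)^{1/2b}}$ collapses the increment to $\frac{1}{2\eta(a)n_a}=(8\log|D|/n_a)^{1/2b}$, so $f_{\hat\theta(a)}(a)-f_{\theta_{t-1}}(a)=(8\log|D|/n_a)^{1/2b}$. Feeding this difference through $g(w)=w^b$ in Eq. (\ref{eq:rofu}) produces $\text{OFU}^R(a)=\bar\theta_a+\big((8\log|D|/n_a)^{1/2b}\big)^b=\bar\theta_a+\sqrt{8\log|D|/n_a}$, which (with $|D|$ the count of past rounds, i.e. $t$ up to the usual off-by-one) is exactly the UCB1 index.

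There is no serious analytic obstacle here; the content is elementary calculus once the per-arm decoupling is noticed. The one place that warrants care is the joint choice of $\eta$ and $g$: the exponent $1/2b$ baked into $\eta(a)$ is precisely what cancels against the power $b$ in $g$ so that the two compose to the square-root bonus, and I would make a point of stating explicitly that concavity guarantees the critical point is the global maximizer rather than merely a stationary point. I would also note in passing that the specialization $b=1/2$ recovers the simpler coefficient $\eta=\frac{1}{16\log|D|}$ listed in Table \ref{table:theory}.
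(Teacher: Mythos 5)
Your proof is correct and follows essentially the same route as the paper's: set the derivative of the objective in Eq.~(\ref{eq:rofu_optim}) to zero (per-arm, since everything decouples), solve for $\hat\theta(a)$, and substitute into Eq.~(\ref{eq:rofu}). If anything, your exponent bookkeeping is more careful than the paper's one-line proof, which writes the intermediate increment as $8\log t/n_a$ (valid only at $b=1/2$) rather than $(8\log t/n_a)^{1/(2b)}$; your version handles general $b$ and makes explicit the cancellation $\bigl((8\log t/n_a)^{1/(2b)}\bigr)^b=\sqrt{8\log t/n_a}$, as well as the concavity argument that justifies taking the stationary point as the maximizer.
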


% \section{Achieving $\tilde{O}(\sqrt{T})$ regret for neural network in the NTK regime}
\section{Analysis on neural tangent kernel bandit} \label{app:sec:ntkregret}

\ntkregret*

The \textbf{step 6} in Algorithm \ref{alg:generalROFU} is accomplished by running gradient descent for $J$ iterations with learning rate $\beta$ to minimize the loss, as given by Algorithm \ref{alg:ntktrainNN}. In the following analysis, we assume that $\theta^{(J)}$ satisfies the first order optimality of minimizing $\mathcal{L}(\theta; D)$ for the ease of the derivation and $\theta_0$ is the randomly initialized starting point.

% \begin{algorithm}[htb!]
% \caption{ROFU in NTK regime}
% \label{alg:ntkROFU}  
% \begin{algorithmic}[1]

% \STATE \textbf{Input}: A reward function $f_{\theta}$ with unknown $\theta$, number of rounds $T$, regularization parameter $\lambda$, step sizes $\alpha, \beta$, number of gradient descent steps $J$, network width $m$ and network depth $L$. 
% \STATE Random initialized parameter $\theta^{(0)}$ and let $\theta_0 = \theta^{(0)}$. 
% \STATE $D_0:=\emptyset$. 
% \FOR{t = 1, ..., T}

%     \STATE Observe ${(x_{t}, a)}_{a\in A}$. $\forall a \in A$, compute
%     \begin{flalign}
%     &\hat{\theta}(x_{t},a) = \text{UPDATE}(x_t, a, \theta^{(0)}, \theta_{t-1}, D_{t-1}), \\
%     &\text{OFU}^{R}(x_{t},a) : = f_{\theta_{t-1}}(x_t, a) + g(f_{\hat{\theta}(x_t,a)}(x_t, a)- f_{\theta_{t-1}}(x_t, a))
%     \end{flalign}%$\hat{\theta}(x_{t},a) = \text{UPDATE}(x_t, a, \theta^{(0) },D_{t-1}, \alpha)$ and $\text{OFU}^{R}(x_{t},a) : = f_{\theta_{t-1}}(x_t, a) + g(f_{\hat{\theta}(x_t,a)}(x_t, a)- f_{\theta_{t-1}}(x_t, a))$
    
%     \STATE Take $a_t = \argmax_{a\in A} \text{OFU}^{R}(x_t,a)$ and receive reward $r_t$. Let $D_t := D_{t-1}\cup \{(x_t, a_t, r_t)\}$.

%     \STATE Let $\theta_t$ = TrainNN($D_t, \beta, \theta^{(0)}$).
% \ENDFOR 
% \end{algorithmic}
% \end{algorithm}

\begin{algorithm}[htb!]
\caption{TrainNN($D, \beta, \theta_0$)}
\label{alg:ntktrainNN}  
\begin{algorithmic}[1]
\STATE Define $\mathcal{L}(\theta; D) = \sum_{(x,a,r)\in D} (f(x,a; \theta)-r)^2 +m\lambda \|\theta-\theta_0\|^2$.
\STATE Let $\theta^{(0)} \leftarrow \theta_0$.
\FOR{j=0, ..., J-1}

    \STATE $\theta^{(j+1)} = \theta^{(j)} - \beta \nabla \mathcal{L}(\theta^{(j)})$.
\ENDFOR
\RETURN $\theta^{(J)}$. 

\end{algorithmic}
\end{algorithm}

By choosing the regularizer $\gR(\theta; D_{t-1})$ as Eq.~(\ref{eq:ntk-regularizer}), the procedure of searching $\hat{\theta}(x_t,a)$ (Eq.~(\ref{eq:rofu_optim})) is equivalent to
\begin{flalign}
\hat{\theta}(x_{t},a) = \argmax \tilde{f}_{\theta}(x_t,a;\theta_{t-1}) - \eta \tilde{\gR}(\theta, D_{t-1};\theta_{t-1}),  \label{eq:ntk-hat-theta}
\end{flalign}
where 
\begin{flalign*}
&\tilde{f}_{\theta}(x_t,a;\theta_{t-1})=f_{\theta_{t-1}}(x_t,a) + \langle \nabla_\theta f_{\theta_{t-1}}(x_t,a),\theta -\theta_{t-1}\rangle,\\
&\tilde{\gR}(\theta, D_{t-1};\theta_{t-1}) :=\sum_{(x,a,r)\in D_{t-1}} (f_{\theta_{t-1}}(x,a) + \langle \nabla_\theta f_{\theta_{t-1}}(x,a),\theta -\theta_{t-1}\rangle -r)^2 + m\lambda \|\theta- \theta_0\|^2 
\end{flalign*} 
are the first order Taylor expansions of $f_{\theta}(x_t,a)$ and $\gL(\theta; D_{t-1})$  at $\theta_{t-1}$, respectively. We note that the special choice of regularizer $\gR(\theta;D_{t-1})$ Eq.~(\ref{eq:ntk-regularizer}) is to derive a simple form of $\hat{\theta}(x_t,a)$.
 
In the following analysis, we first connect ROFU with NeuralUCB via an informal argument. Then we bound the approximations in the informal argument and show they will not affect to establish the sublinear regret in the order sense. We use notation  $h_{\theta_{t-1}}(x_t,a)$ to denote $\nabla_\theta f_{\theta}(x_t,a)\large|_{\theta = \theta_{t-1}}$.

\textbf{An intuitive procedure.}

With the above choices, we can see that the procedure Eq. (\ref{eq:ntk-hat-theta}) is equivalent to
\begin{flalign}
\hat{\theta}(x_{t},a) = \argmax_{\theta}  \langle h_{\theta_{t-1}}(x_t,a),\theta -\theta_{t-1}\rangle + \eta \|\theta - \theta_{t-1}\|_{m\mZ_{t-1}}^2, \label{eq:ntk-hat-theta2}
\end{flalign}
where 
\begin{flalign}
\mZ_{t-1} := \lambda \mI + \frac{1}{m}\sum_{(x,a,r)\in D_{t-1}} h_{\theta_{t-1}}(x,a)h_{\theta_{t-1}}(x,a)^\top \label{eq:Zt-1}
\end{flalign} by assuming that $\theta_{t-1}$ satisfies the first-order optimality condition of minimizing the $\gL(\theta;D_{t-1})$.

Solving the problem Eq. (\ref{eq:ntk-hat-theta2}), we obtain $$\hat{\theta}(x_{t},a)= \theta_{t-1} +\frac{1}{2\eta m}\mZ_{t-1}^{-1} h_{\theta_{t-1}}(x_t,a). $$

Hence if the function $g(\cdot)$ in Eq. (\ref{eq:rofu}) is given by $g(x) = \sqrt{x}$, then the $\text{OFU}^{R}$ becomes
\begin{flalign}
\text{OFU}^{R}(x_t,a) &= f_{\theta_{t-1}} (x_t,a) + \sqrt{f_{\hat{\theta}(x_t,a)} (x_t,a)-f_{\theta_{t-1}} (x_t,a)} \nonumber\\
& \approx  f_{\theta_{t-1}}(x_t,a) +  \frac{1}{\sqrt{2\eta}}\sqrt{h_{\theta_{t-1}}(x_t,a)^\top \mZ_{t-1}^{-1} h_{\theta_{t-1}}(x_t,a)/m}, \label{eq:ntk-rofu}
\end{flalign}
where the approximation is because of using the first order Taylor approximation of $f_{\hat{\theta}(x_t,a)} (x_t,a)$ at $\theta_{t-1}$ to replace $f_{\hat{\theta}(x_t,a)} (x_t,a)$.

Similar to the formula of (\ref{eq:ntk-rofu}), one uses 
\begin{flalign}
U_{t,a} = f_{\theta_{t-1}}(x_t,a) +  \gamma_{t-1}\sqrt{h_{\theta_{t-1}}(x_t,a)^\top \tilde{\mZ}_{t-1}^{-1} h_{\theta_{t-1}}(x_t,a)/m} \label{eq:neuralubc}
\end{flalign} 
as the upper confidence bound in NeuralUCB  \cite[Algorithm 1]{zhou2020neural}. One difference is that they \cite{zhou2020neural} use  \begin{flalign}
\tilde{\mZ}_{t-1} := \lambda \mI + \frac{1}{m}\sum_{\tau=1}^{t-1} h_{\theta_{\tau-1}}(x_\tau,a_\tau) h_{\theta_{\tau-1}}(x_\tau,a_\tau)^\top. \label{eq:neuralucb-z}
\end{flalign}

We next bound the differences between ROFU and NeuralUCB, and then show that the differences won't affect the regret bound on the order of $\sqrt{T}$.

\textbf{Bound the approximations in above derivation}

We first introduce some existing bounds on how the function value is approximated by the first order Taylor expansion in the NTK regime. 

\begin{Lem}[Lemma 4.1\& Theorem B.3, \cite{cao2019generalization}]\label{lem:first-order-approx}
There exist constants $c_1,c_2,c_3>0$ such that for any $\delta\in(0,1)$, if $\omega$ satisfies that $c_1 (mL)^{-3/2}\left(\log (T|A|L^2/\delta)\right)^{3/2} \le \omega \le c_2 L^{-6} (\log m)^{-3/2}$, then with probability at least $1-\delta$, for all $\theta, \tilde{\theta}$ with $\|\tilde{\theta}-\theta_0\|_2 \le \omega, \|\theta-\theta_0\|_2 \le \omega$, and for all  $x\in \{x_t\}_{t\in [T]}$ and all $a\in A$, we have 
\begin{flalign}
&\left|f_{\tilde{\theta}}(x,a) -f_\theta(x,a) - \langle h_\theta(x,a),  \tilde{\theta} - \theta \rangle\right| \le c_3 \omega^{4/3} L^3 \sqrt{m\log m},\\
&\|h_\theta(x,a)\|_F \le c_4 \sqrt{mL}.
\end{flalign}
\end{Lem}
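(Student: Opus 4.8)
The plan is to treat both inequalities as statements about the randomly initialized network $f_\theta$ evaluated inside the small ball $\{\theta : \|\theta-\theta_0\|_2 \le \omega\}$, so the entire argument is a concentration-plus-perturbation analysis at initialization. I would first reduce the first-order approximation bound to a \emph{gradient-stability} estimate. Writing $\theta_s := \theta + s(\tilde\theta-\theta)$ and applying the fundamental theorem of calculus,
\begin{align*}
f_{\tilde\theta}(x,a) - f_\theta(x,a) - \langle h_\theta(x,a), \tilde\theta - \theta\rangle = \int_0^1 \langle h_{\theta_s}(x,a) - h_\theta(x,a), \tilde\theta - \theta\rangle\, ds ,
\end{align*}
so the left-hand side is at most $\sup_{s\in[0,1]}\|h_{\theta_s}(x,a) - h_\theta(x,a)\|_2 \cdot \|\tilde\theta - \theta\|_2 \le 2\omega \sup_s \|h_{\theta_s} - h_\theta\|_2$, using $\|\tilde\theta-\theta\|_2 \le 2\omega$. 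Hence the stated $\omega^{4/3}$ bound follows once I show the Jacobian changes by at most $O(\omega^{1/3} L^3 \sqrt{m\log m})$ across the ball; the gradient-norm bound is simply the endpoint estimate $\|h_\theta\|_2$ proved along the way.

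For the gradient-norm bound I would analyze the forward and backward passes at a fixed $\theta$ in the ball. With the $\sqrt m$ output scaling and Gaussian initialization, standard concentration (operator-norm bounds $\|\mW_l\|_2 \le c\sqrt m$ for Gaussian matrices, together with the $1$-Lipschitz positive homogeneity of $\sigma$) shows that each hidden representation satisfies $\|\vx^{(l)}\|_2 = \Theta(1)$ and each backpropagated signal has norm $O(\sqrt m)$ with high probability. Since the block $\nabla_{\mW_l} f$ is an outer product of a backward signal with a forward activation, $\|\nabla_{\mW_l} f\|_F = O(\sqrt m)$; summing over the $L$ layers gives $\|h_\theta\|_F^2 = O(mL)$, i.e.\ $\|h_\theta\|_F \le c_4 \sqrt{mL}$.

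For the stability of $h_\theta$ across the ball I would track how the forward activations and backward signals change as $\theta$ moves to $\theta_s$. The key quantitative input is that a weight perturbation of size $\omega$ flips only a controlled fraction of ReLU activation patterns — because the pre-activations at initialization are anti-concentrated away from $0$ — and shifts each hidden representation by $O(\omega)$ per layer. Propagating these perturbations through the $L$ layers and through the products of (perturbed) weight matrices and diagonal activation matrices yields $\|h_{\theta_s} - h_\theta\|_2 \le O(\omega^{1/3} L^3 \sqrt{m\log m})$, which combined with the reduction above gives the first inequality. A union bound over the $T|A|$ relevant pairs $(x,a)$ and over the layers produces the $\log(T|A|L^2/\delta)$ factor and pins down the admissible lower value of $\omega$.

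The hard part is the perturbation analysis of the activation patterns: ReLU sign flips are discontinuous, so the gradient cannot be controlled by a naive Lipschitz argument. I would bound the number of neurons whose pre-activation sign changes by combining an \emph{anti-concentration} estimate for the Gaussian pre-activations (so that few neurons lie within $O(\omega)$ of the threshold) with a bound on the accumulated perturbation of each pre-activation. The compounding of these per-layer errors across depth is precisely what forces the upper restriction $\omega \le c_2 L^{-6}(\log m)^{-3/2}$ and generates the characteristic $\omega^{1/3}$ and $L^3$ factors. Everything else — the forward/backward concentration, the outer-product norm bookkeeping, and the final union bound — is routine once this activation-flip count is established.
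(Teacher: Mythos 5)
The paper contains no proof of this lemma at all: it is imported verbatim from Cao \& Gu \cite{cao2019generalization} (their Lemma 4.1 and Theorem B.3), with the gradient-stability input your argument hinges on stated separately as Lemma~\ref{lem:gradient-distance} from \cite{allen2019convergence}, so your sketch --- the fundamental-theorem-of-calculus reduction to Jacobian drift, forward/backward norm concentration giving $\|h_\theta\|_F \le c_4\sqrt{mL}$, activation-flip counting via Gaussian anti-concentration of pre-activations, and a union bound over the $T|A|$ pairs producing the $\log(T|A|L^2/\delta)$ factor --- is essentially the same route as the cited proofs. One bookkeeping caveat: composing Lemma~\ref{lem:gradient-distance} with $\|h_{\theta_0}\|_F \le c_4\sqrt{mL}$ gives drift $O(\omega^{1/3}L^{3}\sqrt{\log m}\cdot\sqrt{mL})$ and hence $\omega^{4/3}L^{7/2}\sqrt{m\log m}$ rather than the stated $L^3$, a spurious $\sqrt{L}$ that Cao \& Gu's direct semi-smoothness argument avoids but which is immaterial to how the lemma is used here.
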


We also introduce a bound on the difference between the gradient at $\theta$ and the gradient at initial point $\theta_0$ when $\theta$ is not far from $\theta_0$ as required and satisfied in the NTK regime.
\begin{Lem}[Theorem 5, \cite{allen2019convergence}] \label{lem:gradient-distance}
There exist constants $c_1,c_2,c_3>0$ such that for any $\delta \in (0,1)$, if $\omega$ satisfies that $c_1 (mL)^{-3/2}\max\{\log^{-3/2} m, \log^{3/2} (T|A|/\delta)\}\le \omega \le c_2 L^{-9/2} \log^{-3} m\} \le \omega \le c_2 L^{-6} (\log m)^{-3/2}$, then with probability at least $1-\delta$, for all $\theta$ with $\|\theta-\theta_0\|_2 \le \omega$,and for all  $x\in \{x_t\}_{t\in [T]}$ and all $a\in A$, we have 
\begin{flalign}
\|h_\theta(x,a) - h_{\theta_0}(x,a)\|_2 \le c_3 \sqrt{\log m}\omega^{1/3}L^3 \|h_{\theta_0}(x,a)\|_2.
\end{flalign}
\end{Lem}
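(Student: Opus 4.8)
This is a stability property of the gradient map in the overparameterized (NTK) regime, and the plan is to reconstruct it from the backpropagation structure of the network $f_\theta(\vx) = \sqrt{m}\,\mW_L\sigma(\mW_{L-1}\sigma(\cdots\sigma(\mW_1\vx)))$. Write the forward activations as $\vh_0 = \vx$ and $\vh_\ell = \sigma(\mW_\ell \vh_{\ell-1})$, let $\mD_\ell = \mathrm{diag}(\sigma'(\mW_\ell\vh_{\ell-1}))$ be the (data- and weight-dependent) activation patterns, and let $\vb_\ell$ denote the backpropagated signal obtained by pushing the top-layer weight through the product $\mD_{L-1}\mW_{L-1}^\top\cdots\mD_\ell$. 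The layerwise gradient then factors as $\nabla_{\mW_\ell} f_\theta = \sqrt{m}\,\vb_\ell\,\vh_{\ell-1}^\top$, so the full gradient difference $h_\theta-h_{\theta_0}$ decomposes across layers and, within each layer, telescopes as
\[
\nabla_{\mW_\ell} f_\theta - \nabla_{\mW_\ell} f_{\theta_0} = \sqrt{m}\big[(\vb_\ell - \vb_\ell^{(0)})\,\vh_{\ell-1}^\top + \vb_\ell^{(0)}(\vh_{\ell-1} - \vh_{\ell-1}^{(0)})^\top\big].
\]
The whole argument reduces to controlling, with high probability over $\theta_0$, (i) the forward perturbation $\|\vh_{\ell-1} - \vh_{\ell-1}^{(0)}\|$, (ii) the backward perturbation $\|\vb_\ell - \vb_\ell^{(0)}\|$, and (iii) the reference norms $\|\vh_\ell^{(0)}\|$ and $\|\vb_\ell^{(0)}\|$, all uniformly over the finite index set $\{x_t\}_{t\in[T]}\times A$.

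\textbf{Steps.} First I would establish the initialization estimates: under the scaled Gaussian initialization, concentration of random ReLU layers gives $\|\vh_\ell^{(0)}\| = \Theta(1)$ for every $\ell$ and, after union-bounding over the $T|A|$ inputs, the spectral norms $\|\mW_\ell^{(0)}\| = O(1)$; together these yield $\|h_{\theta_0}(x,a)\| = \Theta(\sqrt{mL})$, matching the reference scale on the right-hand side and consistent with the second display of Lemma~\ref{lem:first-order-approx}. Second, by induction over $\ell$ I would propagate the per-layer weight perturbation $\|\mW_\ell - \mW_\ell^{(0)}\| \le \omega$ through the forward pass; since each layer is $1$-Lipschitz composed with an $O(1)$-spectral-norm matrix, the forward perturbation accumulates to $\|\vh_\ell - \vh_\ell^{(0)}\| = O(\omega\,\mathrm{poly}(L))$. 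Third — the crux — I would bound the number of neurons whose activation pattern flips, i.e. the coordinates where $\mD_\ell \ne \mD_\ell^{(0)}$: by Gaussian anti-concentration (small-ball) for the pre-activations together with the forward perturbation bound, this count is at most $O(m\,\omega^{2/3}L)$ with high probability. Fourth, feeding the sign-flip count and the weight perturbations into the product defining $\vb_\ell$ and telescoping the matrix product layer by layer gives $\|\vb_\ell - \vb_\ell^{(0)}\| \le O(\sqrt{\log m}\,\omega^{1/3}\,\mathrm{poly}(L))\,\|\vb_\ell^{(0)}\|$, where the $\omega^{1/3}$ is exactly $\sqrt{\omega^{2/3}}$ coming from the fraction of flipped neurons.

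\textbf{Combining and the main obstacle.} Plugging these estimates into the layerwise decomposition, bounding the Frobenius norm of each of the two terms, and then summing the squared contributions over $\ell = 1,\dots,L$ and taking the square root produces $\|h_\theta - h_{\theta_0}\|_2 \le c_3\sqrt{\log m}\,\omega^{1/3}L^3\,\|h_{\theta_0}(x,a)\|_2$; the cubic power of $L$ collects the layerwise amplifications from both the forward recursion and the backward matrix product. I expect the main obstacle to be the activation-pattern-change bound (the third step) and its interaction with the backward pass: because ReLU is non-smooth, the gradient is not Lipschitz in the ordinary sense, so one cannot simply differentiate; instead one must count sign flips via anti-concentration and then show these flips perturb every factor of the deep product $\mD_{L-1}\mW_{L-1}^\top\cdots\mD_\ell$ by only an $\omega^{1/3}$-fraction simultaneously. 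This is where the requirement $m \ge \mathrm{poly}(T,|A|,L,\lambda^{-1},\lambda_0^{-1})$ and the lower bound on $\omega$ enter, ensuring all the concentration and union bounds are valid. Since this is precisely Theorem~5 of \cite{allen2019convergence}, I would in the write-up either invoke it directly or reproduce the sign-flip-counting argument above in the appendix.
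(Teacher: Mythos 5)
The paper does not prove this lemma at all: it is imported verbatim as Theorem~5 of \cite{allen2019convergence}, so the paper's own ``proof'' is just the citation, and your closing option of invoking the reference directly is exactly what the paper does. Your reconstruction is moreover faithful to the actual argument in that reference --- the layerwise factorization $\nabla_{\mW_\ell} f_\theta = \sqrt{m}\,\vb_\ell \vh_{\ell-1}^\top$ with the telescoping decomposition, forward stability of the activations, the $O(m\,\omega^{2/3}L)$ activation-flip count obtained from Gaussian anti-concentration (the crux, as you correctly identify, since ReLU gradients are not Lipschitz), and backward stability contributing the $\omega^{1/3}=\sqrt{\omega^{2/3}}$ and $\sqrt{\log m}$ factors exactly where you place them --- so this is essentially the same approach, not a different route.
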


With these existing bounds, we next show that $\mZ_t$ (Eq.~(\ref{eq:Zt-1})) enjoys the same property as $\tilde{\mZ}_t$ (Eq.~(\ref{eq:neuralucb-z})) as follows. We introduce another sequence $\bar{\mZ}_t = \lambda \mI + \frac{1}{m} \sum_{\tau =1}^t h_{\theta_0}(x_\tau,a_\tau)h_{\theta_0}(x_\tau, a_\tau)^\top$. In \cite[Lemma B.3]{zhou2020neural}, they showed that $\tilde{\mZ}_t$ is close to $\bar{\mZ}_t$. Similarly, we have the following lemma on the distance on $\|\mZ_t - \bar{\mZ}_t\|_F$.

\begin{Lem}\label{lem:Zt-property}
With probabililty $1-\delta$ where $\delta\in (0,1)$, for any $t\in [T]$, we have 
\begin{flalign}
\|\mZ_t\|_2 \le \lambda + c_1 t L, \quad \|\mZ_t -\bar{\mZ}_t\|_F \le c_2 m^{-1/6} \sqrt{\log m} L^4 t^{7/6} \lambda^{-1/6},
\end{flalign}
if the network width $m$ satisfies that
\begin{flalign}
c_3 m^{-3/2} L^{-3/2}[\log (T|A|L^2/\delta)]^{3/2} \le 2 \sqrt{t/(m\lambda)} \le c_4 L^{-6} [\log m]^{-3/2}, \forall t \in [T],
\end{flalign}
with some constants $c_1,c_2, c_3, c_4>0$.
\end{Lem}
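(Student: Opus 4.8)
The plan is to follow the blueprint of Lemma B.3 in \cite{zhou2020neural}, which controls the NeuralUCB matrix $\tilde{\mZ}_t$, and transfer it to $\mZ_t$. The structural difference is mild and in fact favorable: every summand of $\mZ_t$ is built from the single gradient map $h_{\theta_t}$ evaluated at the current iterate, whereas $\tilde{\mZ}_t$ mixes the lagged gradients $h_{\theta_{\tau-1}}$. Hence I only need to control one parameter $\theta_t$ rather than the whole trajectory, and the comparison is naturally routed through the initialization matrix $\bar{\mZ}_t$.

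The first step is to certify that $\theta_t$ stays inside the NTK ball, i.e. $\|\theta_t - \theta_0\|_2 \le \omega$ with $\omega := 2\sqrt{t/(m\lambda)}$. This follows from the first-order optimality assumed in the statement together with the loss of Eq.~(\ref{eq:ntk-loss}): comparing the optimal value against the value at $\theta_0$ gives $m\lambda\|\theta_t - \theta_0\|_2^2 \le \mathcal{L}(\theta_t; D_t) \le \mathcal{L}(\theta_0; D_t) = \sum_{(x,a,r)\in D_t}(f_{\theta_0}(x,a)-r)^2 = O(t)$, using that the rewards are bounded and $f_{\theta_0}$ is bounded at initialization in the NTK setup. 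Solving for the radius yields the claimed $\omega$, and the width hypothesis of the lemma is exactly what forces this $\omega$ into the admissible range of both Lemma~\ref{lem:first-order-approx} and Lemma~\ref{lem:gradient-distance}.

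Given the radius bound, the spectral estimate is immediate: $\|\mZ_t\|_2 \le \lambda + \frac1m\sum_{\tau\le t}\|h_{\theta_t}(x_\tau,a_\tau)\|_2^2 \le \lambda + c_4^2\, t L$, since Lemma~\ref{lem:first-order-approx} gives $\|h_{\theta_t}(x,a)\|_2 \le \|h_{\theta_t}(x,a)\|_F \le c_4\sqrt{mL}$. For the Frobenius distance I would expand
\[
\|\mZ_t - \bar{\mZ}_t\|_F \le \frac1m\sum_{\tau=1}^t \left\| h_{\theta_t}(x_\tau,a_\tau) h_{\theta_t}(x_\tau,a_\tau)^\top - h_{\theta_0}(x_\tau,a_\tau) h_{\theta_0}(x_\tau,a_\tau)^\top \right\|_F,
\]
and bound each rank-two difference via the polarization identity $uu^\top - vv^\top = (u-v)u^\top + v(u-v)^\top$, which yields $\|uu^\top - vv^\top\|_F \le \|u-v\|_2(\|u\|_2 + \|v\|_2)$. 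Taking $u = h_{\theta_t}(x_\tau,a_\tau)$ and $v = h_{\theta_0}(x_\tau,a_\tau)$, Lemma~\ref{lem:gradient-distance} controls $\|u-v\|_2 \le c_3\sqrt{\log m}\,\omega^{1/3}L^3\|v\|_2$ while Lemma~\ref{lem:first-order-approx} controls $\|u\|_2 + \|v\|_2 \le 2c_4\sqrt{mL}$ and $\|v\|_2 \le c_4\sqrt{mL}$. Combining, each summand is $O(\sqrt{\log m}\,\omega^{1/3}L^4 m)$, the factor $m$ cancels the $1/m$ prefactor, summation over $\tau$ contributes a factor $t$, and substituting $\omega^{1/3} = O(t^{1/6}m^{-1/6}\lambda^{-1/6})$ produces exactly the target $O(m^{-1/6}\sqrt{\log m}\,L^4 t^{7/6}\lambda^{-1/6})$.

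I expect the only genuine obstacle to be the first step: pinning down the in-ball guarantee $\|\theta_t - \theta_0\|_2 \le \omega$ and verifying that the width condition makes this $\omega$ simultaneously small enough for the Taylor bound and for the gradient-perturbation bound to hold, uniformly over all $T|A|$ context-action pairs and with the stated failure probability $\delta$. Everything after that is algebraic. A secondary point to handle carefully is that the bounds in Lemma~\ref{lem:first-order-approx} and Lemma~\ref{lem:gradient-distance} are stated uniformly over the ball and over $\{x_t\}\times A$, so a single union bound over $t\in[T]$ delivers the ``for any $t\in[T]$'' conclusion without degrading the rate.
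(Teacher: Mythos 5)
Your proposal is correct and follows essentially the same route as the paper: the paper's proof is a one-line adaptation of Lemma B.3 of \cite{zhou2020neural} using Lemma \ref{lem:gradient-distance}, and your argument simply spells out that adaptation in full---the in-ball radius $\omega=2\sqrt{t/(m\lambda)}$ for $\theta_t$, the spectral bound from the gradient-norm estimate, and the rank-two decomposition $uu^\top-vv^\top=(u-v)u^\top+v(u-v)^\top$ combined with Lemmas \ref{lem:first-order-approx} and \ref{lem:gradient-distance}. The only minor imprecision is attributing $\mathcal{L}(\theta_t;D_t)\le\mathcal{L}(\theta_0;D_t)$ to first-order optimality alone; what actually delivers it is that $\theta_t$ is produced by gradient descent from $\theta_0$ with a small enough step size (as in Lemma B.2 of \cite{zhou2020neural}), but this does not change the conclusion.
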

\begin{proof}
The proof can be adapted from that of Lemma B.3 in \cite{zhou2020neural} with the fact that $\|h_{\theta_{t-1}}(x,a) - h_{\theta_0}(x,a)\|_2$ is bounded because of Lemma \ref{lem:gradient-distance}.
\end{proof}

We note that Lemma \ref{lem:Zt-property} plays the same role as Lemma B.3 in \cite{zhou2020neural}. Hence with Lemma \ref{lem:Zt-property} we can establish a similar result as Lemma 5.2 in \cite{zhou2020neural} that with high probability, the optimal $\theta^*$ lies in the sequence of confidence sets, i.e., an ellipsoid defined with the new  matrix $\mZ_t$.

Next we show that the first-order Taylor approximation in Eq. (\ref{eq:ntk-hat-theta2}) does not affect the achievable regret bound either. By the proof of Lemma 5.3 in \cite{zhou2020neural},    in order to prove a similar result, we only need to bound 
\begin{flalign}
|\text{OFU}^R(x_t,a) - \tilde{U}_{t,a}|,
\end{flalign}
where $\tilde{U}_{t,a} := \langle h_{\theta_{t-1}}(x_t,a),\theta_{t-1}-\theta_0\rangle + \gamma_{t-1} \sqrt{h_{\theta_{t-1}}(x_t,a)^\top \mZ_{t-1}^{-1} h_{\theta_{t-1}}(x_t,a)/m}$ and $\gamma_{t-1}$ is given by Algorithm 1 in \cite{zhou2020neural}.
The derivation is as follows,
\begin{flalign}
&|\text{OFU}^R(x_t,a) - \tilde{U}_{t,a}| \nonumber\\
&\le |f_{\theta_{t-1}}(x_t,a) -  \langle h_{\theta_{t-1}}(x_t,a),\theta_{t-1}-\theta_0\rangle| \nonumber\\
&\quad+ \left|\sqrt{f_{\hat{\theta}(x_t,a)} (x_t,a)-f_{\theta_{t-1}} (x_t,a)} -\gamma_{t-1}\sqrt{h_{\theta_{t-1}}(x_t,a)^\top \mZ_{t-1}^{-1} h_{\theta_{t-1}}(x_t,a)} \right|\nonumber\\
&= \left|f_{\hat{\theta}(x_t,a)}(x_t,a) - f_{\theta_0}(x_t,a) f_{\theta_{t-1}}(x_t,a) -  \langle h_{\theta_{t-1}}(x_t,a),\hat{\theta}(x_t,a) -\theta_{t-1}\rangle\right| \nonumber\\
&\quad + \sqrt{\left|f_{\hat{\theta}(x_t,a)} (x_t,a)-f_{\theta_{t-1}} (x_t,a) -\gamma_{t-1}^2{h_{\theta_{t-1}}(x_t,a)^\top \mZ_{t-1}^{-1} h_{\theta_{t-1}}(x_t,a)}\right|} \nonumber\\
& \le c_3 m^{-1/6} \sqrt{\log m} t^{2/3} \lambda^{-2/3} L^3+ \sqrt{c_3 m^{-1/6} \sqrt{\log m} t^{2/3} \lambda^{-2/3} L^3},
\end{flalign}
where $c_3$ is a constant and the last inequality is due to Lemma \ref{lem:first-order-approx} and the choice of $\eta$ such that $1/(2\eta) = \gamma_{t-1}^2$. Hence we show the $\text{OFU}^R(x_t,a)$ shares the same property as the $U_{t,a}$ in Algorithm 1 of \cite{zhou2020neural}. The left proof follows a similar procedure.
\section{Experimental details}\label{app:detail}

\textbf{Hyperparameters}: We tune the hyper-parameters of ROFU on statlog and directly apply the hyperparameters on statlog to other datasets except mushroom. This is because the reward scale of mushroom is much larger than other datasets. For baselines, we directly use the best  reported hyper-parameters.

For convenience, we suppose context $x_t$ is sampled from some distribution  $\mathcal{P}$ for all $t\leq T$. And we define a virtual dataset $\{(x'_t)\}_T, x'_t\sim \mathcal{P}$ as well as a parameter trained by minimizing MSE on the virtual dataset $\theta'=\text{MinimizeMSE}\left( \sum_{t\leq T} \sum_{a\in A}(f_{\theta}(x'_t,a) - r(x'_t,a))^2\right)$
 where MinimizeMSE is a gradient-based optimizer, e.g., SGD  or Adam \cite{kingma2014adam}. 

To alleviate the influence of regret injected by neural network training and generalization error, we decompose the regret into two parts:
\begin{align}
    \mathcal{R}_T = \underbrace{\sum_{t\leq T}\max_a r(x_t,a) - \mathbb{E}\sum_{t\leq T}r(x_t,a'_t)}_{\RNum{1}}  + \underbrace{\mathbb{E}\sum_{t\leq T}r(x_t,a'_t) - \mathbb{E}\sum_{t\leq T}r(x_t, a_t)}_{\text{\RNum{2}}}
\end{align}

where $a'_t=\argmax_{a}f_{\theta'}(x_t,a)$ and $a_t$ is selected by agent. 

It is noteworthy that in practice, $\mathbb{E}\sum_t r(x_t,a'_t)$ is usually an upper bound of $\mathbb{E}\sum_{t\leq T}r(x_t, a_t)$ as $\theta'$ is trained on $T\times |A|$ data points while $\Bar{\theta}$ and $\hat{\theta}$ are trained  on no more than $T$ data points. 
And  the regret  \RNum{1} is caused by MinimizeMSE and generalization power of function $f_\theta$. So, regret \RNum{1} is independent with the algorithm for EE trade-off. Thus, we only  report regret \RNum{2} in Fig. \ref{fig:ncb}  to give a clearer evaluation on ROFU's ability for EE trade-off. We also present the full regret $R_T$ in Fig. \ref{fig:ncb_full}. The running time is in Table. \ref{table:final_running_time}. 

\begin{table}[H]\label{table:nonlinear-cb-results-table}
\begin{center}
\begin{adjustbox}{width=1.\textwidth} 
	\begin{tabular}{c c c  c  c  c c  c  }
	    \Xhline{2\arrayrulewidth}
     		  &\small ROFU & \small Neural-UCB  & \small Dropout & \small ParamNoise & \small Greedy & \small Bootstrap &\small Neural Linear  \\
     	\hline
     	   Runing time (s)&5259 &1589 & 525 & 306 & 771  & 729 & 3612 \\
        \hline
	\end{tabular}
\end{adjustbox}
\end{center}
\caption{Running time of each algorithm for $20000$ rounds. All experiments are run on Nvidia Tesla P-100 GPU. Our algorithm is slower than NeuralUCB because NeuralUCB uses the diagonal approximation which lacks  theoretical guarantee for the regret.}
\label{table:final_running_time}
\end{table}

 \begin{figure*} [t!]
\centering
\subfigure{\label{fig:covertype}\includegraphics[width=1.\textwidth]{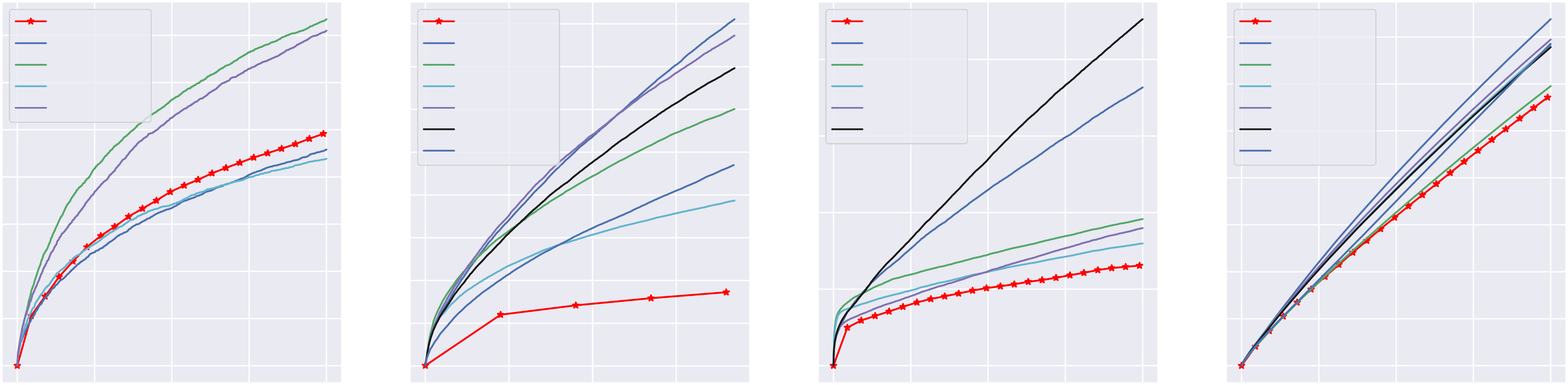}
}
 \quad
 \subfigure{\label{fig:covertype}\includegraphics[width=.75\textwidth]{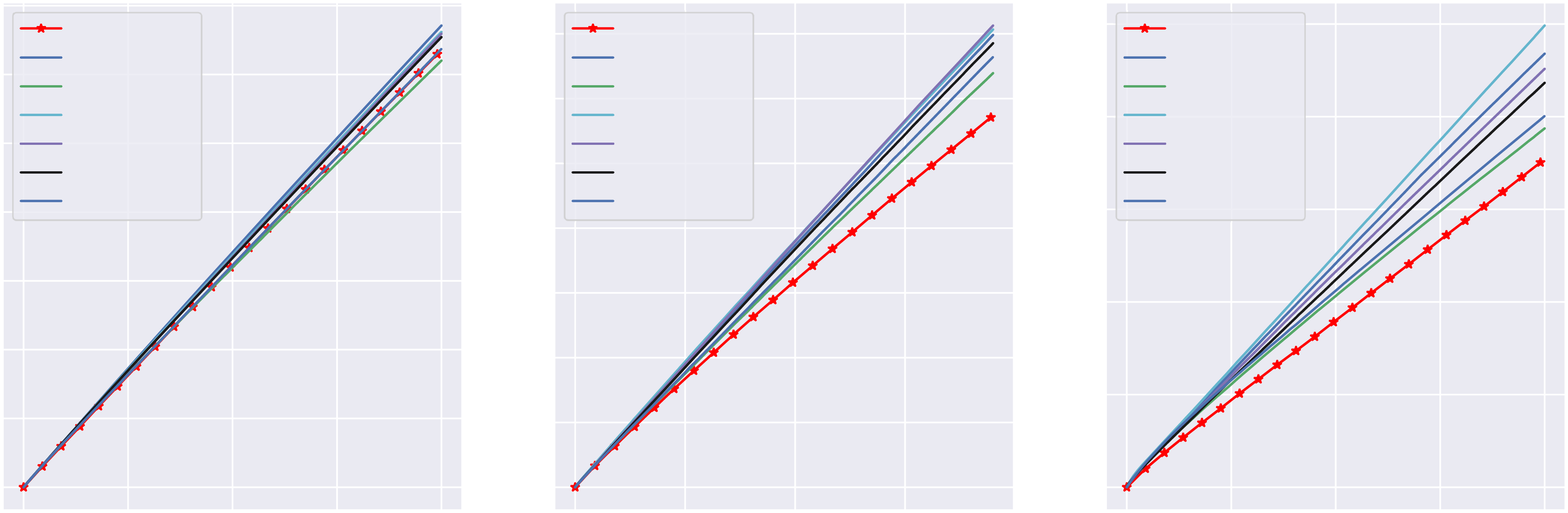}
}

\caption{Evaluations on non-linear contextual bandits.   }
 \label{fig:ncb_full}
\end{figure*}
\end{document}